\documentclass{article}

\usepackage[dblblindworkshop,final]{neurips_2026}

\usepackage[utf8]{inputenc} 
\usepackage[T1]{fontenc}    
\usepackage{hyperref}       
\usepackage{url}            
\usepackage{booktabs}       
\usepackage{amsfonts}       
\usepackage{nicefrac}       
\usepackage{microtype}      
\usepackage{xcolor}         
\usepackage{subcaption}
\usepackage{graphicx}
\usepackage{multirow}
\usepackage{amsmath, amssymb, amsthm}
\usepackage{bm}
\usepackage{mathtools}
\newtheorem{theorem}{Theorem}

\newtheorem*{assumption*}{Assumption}

\usepackage{thmtools}
\usepackage{thm-restate}
\usepackage{cleveref}
\crefname{theorem}{Theorem}{Theorems}
\crefname{corollary}{Corollary}{Corollaries}
\crefname{lemma}{Lemma}{Lemmas}
\crefname{definition}{Definition}{Definitions}
\crefname{assumption}{Assumption}{Assumptions}
\crefname{figure}{Figure}{Figures}
\crefname{section}{Section}{Sections}
\crefname{appendix}{Appendix}{Appendices}

\title{Port-Hamiltonian Neural Networks for Systems with Multiple Asymptotically Stable Equilibria}
\workshoptitle{
AXIOM - Foundations of Efficient Deep Learning}

\author{%
  Simon Heilig$^{1}$\thanks{Equal contribution.} \quad
  Jens P\"uttschneider$^{2}$\footnotemark[1] \quad
  Mohammad Itani$^{2}$ \\
  \textbf{Asja Fischer$^{1}$} \quad
  \textbf{Timm Faulwasser$^{2}$} \\[4pt]
  $^{1}$Faculty of Computer Science, Ruhr University Bochum, Germany \\
  $^{2}$Institute of Control Systems, Hamburg University of Technology, Germany \\[2pt]
  \texttt{simon.heilig@rub.de}, \texttt{jens.puettschneider@tuhh.de}
}
\begin{document}

\maketitle

\begin{abstract}
  Stable port-Hamiltonian neural networks certify asymptotic stability by construction. 
  Yet, their Hamiltonian is a global Lyapunov function with a single global minimum, so they can represent only dynamic systems with {one} attractor. We demonstrate that this excludes even simple systems with energy landscapes forming a double well, and we overcome the restriction by parametrising the Hamiltonian as a {product} of Bregman divergences generated by one input-convex network. 
  We prove that the resulting model is locally Lyapunov stable, that the coexistence of stable equilibria forces additional non-asymptotically-stable equilibria to exist, that all equilibria lie in a bounded region, and under a hyperbolicity assumption that almost-everywhere stability holds.
  On three systems our approach is able to recover the energy surface characteristics and improve the convergence speed by 1.8$\times$-8.5$\times$.
\end{abstract}

\section{Introduction}

Neural ODEs~\citep{chen2018neural} and their physics-informed variants fit a vector field to observed trajectories, restricting the hypothesis class to what is known about the physics: Hamiltonian networks conserve energy~\citep{greydanus2019hamiltonian}, and port-Hamiltonian neural networks (PHNNs)~\citep{desai2021port,neary2023compositional,cherifi2026nonlinear} add dissipation and external ports \citep{vanderschaft2014port}.
Such a bias gives a {certificate}: the energy dissipation holds across the parameters, so stability guarantees follow from the architecture, and it buys {efficiency}, since a hypothesis class matching the data-generating process needs fewer samples and fewer optimiser steps \citep{drgovna2025safe}.

Stability guarantees for learned dynamics have moved from correction to construction: \cite{kolter2019learning} project a freely learned vector field onto the halfspace certified by an input-convex Lyapunov function with the fully input-convex network (FICNN) \citep{amos2017input}. Instead, \cite{yu2021onsagernet} rely on a generalized Onsager decomposition of the dissipative and conservative parts, and \cite{roth2025stable} make the Hamiltonian convex, forming a stable port-Hamiltonian network (s-PHNN) allowing for a strong statement about the neural network's behavior: an s-PHNN is globally asymptotically stable for {every} parameter value, naturally transferring to inference time.

Many dissipative systems of interest are not globally asymptotically stable with respect to one equilibrium: a buckled beam settles into one of two deflected shapes and a pendulum over magnets onto one of the magnets.
These systems have finitely many isolated attractors separated by saddles, with basins partitioning state space up to a null set. \cite{breunung2025learning} uses Koopman representations to model such systems, leaving their model without rigorous stability guarantees.
This is complementary to Lyapunov-based region-of-attraction certification \citep{richards2018lyapunov} and to projection-based construction of asymptotically stable invariant sets \citep{takeishi2021learning}.
\vspace{-4pt}
\paragraph{Contributions.}
(i) We draw upon the Duffing oscillator~\citep{haddad2008nonlinear} to illustrate that s-PHNNs cannot represent a system with multiple stable equilibria dynamics due to its global asymptotic stability guarantee.
(ii) We propose an extension via a multiplicative Hamiltonian, where each factor captures one stable equilibrium, and evaluate it on three systems with multiple stable equilibria.
(iii) We provide theoretical guarantees for the local asymptotic stability of the minimum-energy equilibria, the existence and boundedness of other equilibria, and almost-global asymptotic stability of the stable equilibria.

\section{Mismatch in Inductive Biases}
\label{sec:counterexample}
A simple example of a port-Hamiltonian system with multiple asymptotically stable equilibria is the Duffing oscillator.
The system states are position $q$ and momentum $p$, its total energy is described by the Hamiltonian $\mathrm{H}(q,p) = \tfrac12 p^2 + \tfrac14(q^2-1)^2$, and its system dynamics are $\dot{\bm{x}} = (\bm{J}-\bm{R})\nabla_{\bm{x}} \mathrm{H}$, where $\bm{J}$ is the canonical skew-symmetric interconnection matrix and $\bm{R} = \operatorname{diag}(0,\mu)$ is the damping matrix, with, e.g., $\mu = 0.1$.
This system possesses two stable equilibria at $\bar{\bm{x}}_{1,2} = 
[\pm1, 0]^T$ corresponding to global minima of the energy $H$, while the {saddle} $\bm{x}_s=[0,0]^T$ is another equilibrium whose stable manifold divides the two {basins} and the convergence depends on the initial condition $\bm{x}_0$.

In \Cref{fig:energy_landscape_sphnn}, we show the trained energy landscape of the s-PHNN from~\cite{roth2025stable}, illustrating how the s-PHNN can only learn one of the two stable equilibria of the Duffing oscillator.
This is precisely what one would expect from the inductive bias imposed by the global asymptotic stability guarantee of the s-PHNN: it can have only one attractor and hence cannot capture the bistable nature of the Duffing oscillator, instead it learns only one of its two stable equilibria.
This calls for a model which preserves the port-Hamiltonian inductive bias and its stability guarantees while allowing for multiple stable equilibria.

\begin{figure}[t]
  \centering
  \makebox[\linewidth][l]{\hspace*{-0.02\linewidth}%
    \includegraphics[width=0.75\linewidth,  trim={1.35cm 0 0 0},clip]{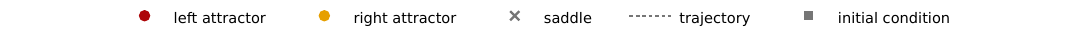}}%
  \\[-1pt]
  \begin{minipage}[b]{0.0712\linewidth}%
    \centering
    \includegraphics[height=3cm]{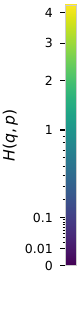}%
    \subcaption*{\phantom{(a)}}%
  \end{minipage}%
  \begin{subfigure}[b]{0.285\linewidth}%
    \includegraphics[width=\linewidth,height=3cm]{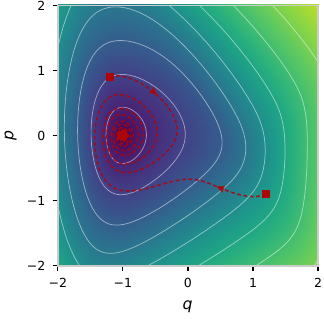}%
    
    \subcaption{s-PHNN}\label{fig:energy_landscape_sphnn}%
  \end{subfigure}%
  \begin{subfigure}[b]{0.285\linewidth}%
    \includegraphics[width=\linewidth,height=3cm]{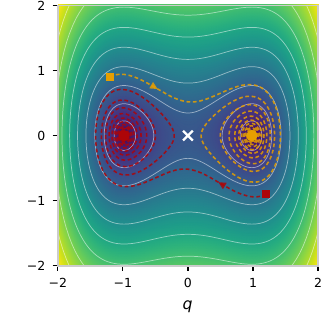}%
    \subcaption{ms-PHNN (ours)}\label{fig:energy_landscape_msphnn}%
  \end{subfigure}%
  \hspace*{1em}
  \begin{subfigure}[b]{0.33\linewidth}%
  \centering
    \includegraphics[width=\linewidth,height=3.0cm]{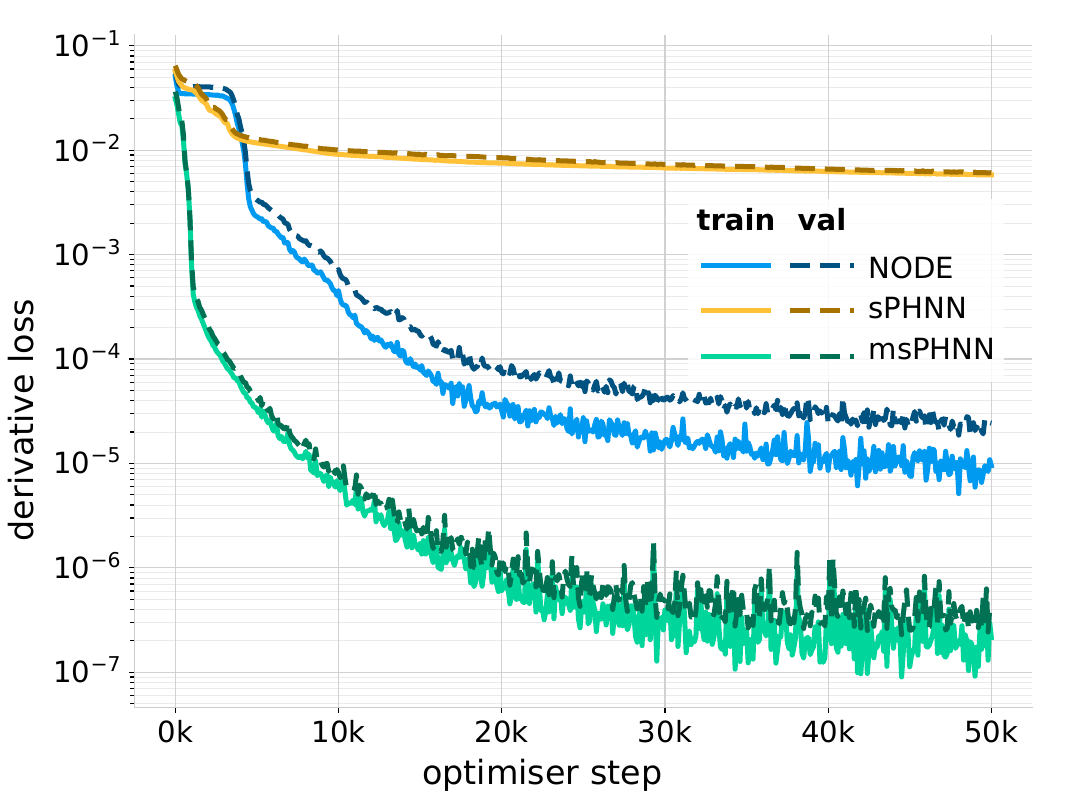}%
    \subcaption{Training \& validation loss}\label{fig:curves_dw}%
  \end{subfigure}%
  \caption{
	Energy landscape and two trajectories of the Duffing oscillator for (a) the fitted s-PHNN and (b) our ms-PHNN, showing that the ms-PHNN captures both asymptotically stable equilibria, whereas the s-PHNN learns only one.
  	(c) Training and validation derivative loss for neural ODE (NODE), s-PHNN, and our ms-PHNN, median over 10 seeds.
  }%
  \label{fig:sphnn_failure}
\end{figure}

\section{Port-Hamiltonian Neural Networks with Multiple Stable Equilibria}
To this end, we propose a port-Hamiltonian Neural Network with multiple stable equilibria (\mbox{ms-PHNN}) based on a multiplicative Hamiltonian.
The ms-PHNN dynamics with state $\bm{x}\in\mathbb{R}^{n_x}$ and parameters $\bm{\theta}\in\mathbb{R}^{n_\theta}$ is
\begin{equation}
	\label{eq:phnn}
    \dot{\bm{x}} = \mathrm{f}_{\mathrm{PH}} (\bm{x}) = \left[\bm{J}_{\bm{\theta}}(\bm{x}) - \bm{R}_{\bm{\theta}}(\bm{x})\right]\nabla_{\bm{x}} \mathrm{H}_{\bm{\theta}}(\bm{x}), \quad \bm{x}(0) = \bm{x}_0,
    \quad \bm{J}_{\bm{\theta}} = -\bm{J}_{\bm{\theta}}^\top, \quad \bm{R}_{\bm{\theta}} = \bm{R}_{\bm{\theta}}^\top \succeq \bm{0}.
\end{equation}
For our ms-PHNN we parametrize the Hamiltonian as the product of $m$ factors, one per equilibrium from the set of pairwise distinct points $\bar{\mathbb{X}} = \{\bar{\bm{x}}_1,\dots,\bar{\bm{x}}_m\}$
\begin{equation}
	\label{eq:hamiltonian}
    \mathrm{H}_{\bm{\theta}}(\bm{x}) = \prod_{k=1}^m D_{\bm{\theta},k}(\bm{x}),
    \qquad
    D_{\bm{\theta},k}(\bm{x}) = f_{\bm{\theta},k}(\bm{x}) - f_{\bm{\theta},k}(\bar{\bm{x}}_k) - \nabla_{\bm{x}} f_{\bm{\theta},k}(\bar{\bm{x}}_k)^{\!\top}(\bm{x} - \bar{\bm{x}}_k).
\end{equation}
Each $D_{\bm{\theta},k}$ follows the proposed Bregman divergence construction for a single equilibrium~\citep{roth2025stable}, and all $m$ factors are generated by a \emph{single} FICNN $f_{\bm{\theta}} \in C^\infty(\mathbb{R}^{n_x},\mathbb{R}^m)$ whose $k$-th output produces the $k$-th factor.
Throughout, we take the interconnection and dissipation matrices to be continuously differentiable, $\bm{J}_{\bm{\theta}}, \bm{R}_{\bm{\theta}} \in C^1(\mathbb{R}^{n_x}, \mathbb{R}^{n_x \times n_x})$.

Since every Bregman divergence is nonnegative and $D_{\bm{\theta},k}(\bar{\bm{x}}_k) = 0$, we have $\mathrm{H}_{\bm{\theta}}(\bm{x}) \geq 0$ for all $\bm{x}$, with $\mathrm{H}_{\bm{\theta}}(\bar{\bm{x}}) = 0$ at the minimum-energy equilibria $\bar{\bm{x}} \in \bar{\mathbb{X}}$.
By using one shared FICNN, we avoid using a separate network for each minimum-energy equilibrium: the factors $D_{\bm{\theta},k}(\bm{x})$ share all hidden layers and differ only in the final convex read-out.
Following the setup of \cite{roth2025stable}, each minimum-energy equilibrium $\bar{\bm{x}}_k$ may be fixed a-priori or learned jointly with the parameters of the FICNN.
The flow of the ms-PHNN~\eqref{eq:phnn},\eqref{eq:hamiltonian}, starting from the initial condition $\bm{x}_0 \in \mathbb{R}^{n_x}$ is given by~
$
	\bm{x}(t; \bm{x}_0)
	=
	\bm{x}_0
	+
	\int_0^t
	\left[\bm{J}_{\bm{\theta}}(\bm{x}(\tau; \bm{x}_0)) - \bm{R}_{\bm{\theta}}(\bm{x}(\tau; \bm{x}_0))\right]
	\nabla_{\bm{x}} \mathrm{H}_{\bm{\theta}}(\bm{x}(\tau; \bm{x}_0))\,
	\mathrm{d}\tau.
$

Many mechanical systems have multiple asymptotically stable positions, and are typically separable: the state $\bm{x}$ splits into generalized coordinates $\bm{q}\in \mathbb{R}^{n_p}$ and their conjugate momenta $\bm{p}\in \mathbb{R}^{n_p}$, i.e. $\bm{x} = [\bm{q}^\top, \bm{p}^\top]^\top$.
To encode both properties in the ms-PHNN, we construct the Hamiltonian as
$
	\mathrm{H}_{\bm{\theta}}(\bm{x}) = T_{\bm{\theta}}(\bm{p}) + V_{\bm{\theta}}(\bm{q})
$.
The multiple stable positions $\bar{\bm{q}}_k$ are encoded in the potential energy using the product approach $V_{\bm{\theta}} = \prod_k D_{\bm{\theta},k}(\bm{q})$, so that $V_{\bm{\theta}}$ attains its minimum at each $\bar{\bm{q}}_k$, while the kinetic energy $T_{\bm{\theta}}$ is represented using a single-equlibrium s-PHNN as in~\cite{roth2025stable}, with its unique minimum at some $\bar{\bm{p}}^{\star}$.
Since both summands are non-negative, the Hamiltonian attains its minima at $\bar{\bm{x}}_k = [\bar{\bm{q}}_k^\top, \bar{\bm{p}}^{\star\top}]^\top$, which therefore become equilibria of the ms-PHNN.
This approach further helps the network converge faster, as the separable structure of the Hamiltonian is modeled explicitly rather than learned.

\subsection{Stability Analysis}
\label{sec:stability_analysis}

We now state our main stability guarantees for the proposed ms-PHNN, the proofs of which are given in \cref{sec:appendix_stability}. 
Throughout, we assume the following:
\begin{restatable}[Minimum-energy equilibria satisfy second-order sufficient conditions]{assumption}{AssBregmanSecondOrder}
	\label{ass:bregmann_second_order_sufficient}
	Consider the ms-PHNN~\eqref{eq:phnn},\eqref{eq:hamiltonian}.
	Every Bregman divergence $D_{\bm{\theta},k}$, $k \in \{1, \dots, m\}$, satisfies at its respective minimizer $\bar{\bm{x}}_k \in \bar{\mathbb{X}}$ the second-order sufficient conditions:
	$
		\nabla_{\bm{x}}^2 D_{\bm{\theta},k}(\bar{\bm{x}}_k) \succ \bm{0}.
	$
\end{restatable}
This positive definiteness can be verified directly, or enforced by construction by adding a strongly convex regularization term $\varepsilon \lVert \bm{x} - \bar{\bm{x}}_k \rVert^2$, $\varepsilon > 0$, to the Bregman divergence~\eqref{eq:hamiltonian}, following~\cite{roth2025stable}.
Under this assumption, each minimum-energy equilibrium of the ms-PHNN is locally stable, and asymptotically stable under strict dissipation.
\begin{restatable}[Local asymptotic stability of the minimum-energy equilibria]{theorem}{ThmWellStability}\label{thm:well_stability}
	Consider the ms-PHNN~\eqref{eq:phnn},\eqref{eq:hamiltonian} and $\bm{R}_{\bm{\theta}}(\bm{x}) \succeq \bm{0}$, and let \cref{ass:bregmann_second_order_sufficient} hold.
	In addition, let $\bm{J}_{\bm{\theta}}(\bm{x}) - \bm{R}_{\bm{\theta}}(\bm{x})$ be non-singular for all $\bm{x} \in \mathbb{R}^{n_x}$.
	Then, for each minimum-energy equilibrium $\bar{\bm{x}}_i \in \bar{\mathbb{X}}$, $\mathrm{H}_{\bm{\theta}}$ is a suitable Lyapunov function for showing local stability of $\bar{\bm{x}}_i$, and therefore $\bar{\bm{x}}_i$ is a stable equilibrium.
	Moreover, if $\bm{R}_{\bm{\theta}}(\bm{x}) \succ \bm{0}$ for all $\bm{x}\in \mathbb{R}^{n_x}$, then the equilibrium is locally asymptotically stable.
\end{restatable}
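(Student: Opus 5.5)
We will use the Lyapunov candidate $V(\bm{x}) = \mathrm{H}_{\bm{\theta}}(\bm{x})$ on a small neighbourhood of $\bar{\bm{x}}_i$. The proof has four steps: show that $\bar{\bm{x}}_i$ is an equilibrium, show that $\mathrm{H}_{\bm{\theta}}$ is locally positive definite there, show that it does not increase along trajectories, and then sharpen the last step under strict dissipation.

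\textbf{Step 1 (equilibrium).} Since $D_{\bm{\theta},i}(\bar{\bm{x}}_i) = 0$ and $\nabla_{\bm{x}} D_{\bm{\theta},i}(\bar{\bm{x}}_i) = \nabla f_{\bm{\theta},i}(\bar{\bm{x}}_i) - \nabla f_{\bm{\theta},i}(\bar{\bm{x}}_i) = \bm{0}$, the product rule gives
\begin{equation*}
\nabla \mathrm{H}_{\bm{\theta}} = \sum_k \nabla D_{\bm{\theta},k} \prod_{j\neq k} D_{\bm{\theta},j}.
\end{equation*}
Every term with $k\neq i$ contains the factor $D_{\bm{\theta},i}$, which vanishes at $\bar{\bm{x}}_i$. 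The term with $k=i$ contains $\nabla D_{\bm{\theta},i}(\bar{\bm{x}}_i) = \bm{0}$. Hence $\nabla \mathrm{H}_{\bm{\theta}}(\bar{\bm{x}}_i) = \bm{0}$ and $\mathrm{f}_{\mathrm{PH}}(\bar{\bm{x}}_i) = \bm{0}$.

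Non-singularity of $\bm{J}_{\bm{\theta}}-\bm{R}_{\bm{\theta}}$ is not needed for this direction. It shows the converse: equilibria of the ms-PHNN coincide exactly with the critical points of $\mathrm{H}_{\bm{\theta}}$. We will use that converse in Step 4.

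\textbf{Step 2 (local positive definiteness).} This is the main point, because $\mathrm{H}_{\bm{\theta}}$ is a product and a priori another factor $D_{\bm{\theta},j}$ could vanish near $\bar{\bm{x}}_i$.

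First, since the points of $\bar{\mathbb{X}}$ are pairwise distinct, we claim $D_{\bm{\theta},j}(\bar{\bm{x}}_i) > 0$ for $j\neq i$. By Assumption~\ref{ass:bregmann_second_order_sufficient}, $\bar{\bm{x}}_j$ is a strict local minimiser of $D_{\bm{\theta},j}$. Strict positivity at a distant point is not automatic for a merely convex $f$, however. If the regularisation $\varepsilon\lVert\bm{x}-\bar{\bm{x}}_k\rVert^2$ is used, the divergence is strictly positive away from its minimiser. Otherwise we note that the zero set of a Bregman divergence of a convex function is convex and contains $\bar{\bm{x}}_j$ as an isolated point, so it equals $\{\bar{\bm{x}}_j\}$. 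The latter argument works in general: the zero set is a convex set, and a convex set with an isolated point is a singleton. By continuity, $c := \prod_{j\neq i} D_{\bm{\theta},j} > c_0 > 0$ on a ball $B_r(\bar{\bm{x}}_i)$.

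Second, Taylor expansion of $D_{\bm{\theta},i}$ with positive definite Hessian gives $D_{\bm{\theta},i}(\bm{x}) \ge \tfrac{\lambda}{4}\lVert \bm{x}-\bar{\bm{x}}_i\rVert^2$ on a possibly smaller ball. Therefore $\mathrm{H}_{\bm{\theta}}(\bm{x}) \ge c_0\tfrac{\lambda}{4}\lVert\bm{x}-\bar{\bm{x}}_i\rVert^2 > 0$ for $\bm{x}\neq\bar{\bm{x}}_i$, while $\mathrm{H}_{\bm{\theta}}(\bar{\bm{x}}_i)=0$.

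Third, the same product rule shows that the Hessian of $\mathrm{H}_{\bm{\theta}}$ at $\bar{\bm{x}}_i$ is $\bigl(\prod_{j\ne i}D_{\bm{\theta},j}(\bar{\bm{x}}_i)\bigr)\nabla^2 D_{\bm{\theta},i}(\bar{\bm{x}}_i) \succ \bm{0}$. This confirms that $\bar{\bm{x}}_i$ is a nondegenerate isolated critical point.

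\textbf{Step 3 (non-increase).} Skew-symmetry of $\bm{J}_{\bm{\theta}}$ gives
\begin{equation*}
\dot{\mathrm{H}}_{\bm{\theta}} = \nabla\mathrm{H}_{\bm{\theta}}^\top(\bm{J}_{\bm{\theta}}-\bm{R}_{\bm{\theta}})\nabla\mathrm{H}_{\bm{\theta}} = -\nabla\mathrm{H}_{\bm{\theta}}^\top\bm{R}_{\bm{\theta}}\nabla\mathrm{H}_{\bm{\theta}} \le 0.
\end{equation*}
Lyapunov's direct method then gives stability of $\bar{\bm{x}}_i$. Existence and uniqueness of solutions follow because the vector field is $C^1$: $\bm{J}_{\bm{\theta}}, \bm{R}_{\bm{\theta}} \in C^1$ and $\mathrm{H}_{\bm{\theta}} \in C^\infty$.

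\textbf{Step 4 (asymptotic stability).} If $\bm{R}_{\bm{\theta}}\succ\bm{0}$, then $\dot{\mathrm{H}}_{\bm{\theta}}<0$ wherever $\nabla\mathrm{H}_{\bm{\theta}}\neq\bm{0}$. Since $\bar{\bm{x}}_i$ is a nondegenerate critical point, it is isolated, so on a small ball $\nabla\mathrm{H}_{\bm{\theta}}\neq\bm{0}$ except at $\bar{\bm{x}}_i$. The derivative $\dot{\mathrm{H}}_{\bm{\theta}}$ is therefore negative definite locally, and Lyapunov's theorem yields local asymptotic stability.

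Alternatively, LaSalle's principle applies on a compact sublevel-set component inside the ball. There $\dot{\mathrm{H}}_{\bm{\theta}}=0$ forces $\nabla\mathrm{H}_{\bm{\theta}}=\bm{0}$, so by the non-singularity assumption the largest invariant set consists of equilibria, which reduce to $\{\bar{\bm{x}}_i\}$.

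The step I expect to be most delicate is Step 2: establishing strict positivity of the other factors at $\bar{\bm{x}}_i$ through the convex-zero-set argument.
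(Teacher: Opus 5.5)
Your proposal is correct and follows essentially the same route as the paper: $\mathrm{H}_{\bm{\theta}}$ as a local Lyapunov function, positive near $\bar{\bm{x}}_i$ because its Hessian there is positive definite and the other factors are strictly positive by convexity. Then $\dot{\mathrm{H}}_{\bm{\theta}} = -\nabla\mathrm{H}_{\bm{\theta}}^\top \bm{R}_{\bm{\theta}}\nabla\mathrm{H}_{\bm{\theta}} \le 0$, with strict decrease under $\bm{R}_{\bm{\theta}}\succ\bm{0}$ because $\bar{\bm{x}}_i$ is an isolated critical point; you are also right that non-singularity of $\bm{J}_{\bm{\theta}}-\bm{R}_{\bm{\theta}}$ is not needed (the paper uses it only to say $\bar{\bm{x}}_i$ is the only nearby equilibrium), and your LaSalle variant does not need it either, since critical points of $\mathrm{H}_{\bm{\theta}}$ are automatically equilibria.
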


Since the product Hamiltonian under~\Cref{ass:bregmann_second_order_sufficient} has $m \geq 2$ strict minimizers, which are stable equilibria of the ms-PHNN, the mountain pass theorem guarantees the existence of at least a third critical point, which is itself an equilibrium of the ms-PHNN (cf.\ \cref{thm:existance_off_well_equlibia}).
We collect these other equilibria into the set $\mathbb{X}_s = \left\{\bm{x} \in \mathbb{R}^{n_x} \middle| \mathrm{f}_{\mathrm{PH}}(\bm{x}) = \bm{0}\right\} \setminus \bar{\mathbb{X}} \neq \emptyset$, which must lie in a bounded region of the state space (\cref{thm:bounded_off_well_equilibria}).
The existence of these additional equilibria prevents a global convergence guarantee to the minimum-energy equilibria without further assumptions on their stability properties.

\begin{restatable}[Other equilibria are hyperbolic with unstable linearization]{assumption}{AssPhnnEquilibria}
	\label{ass:phnn_equilibria}
	The set $\mathbb{X}_s$ of other equilibria is finite.
	Every $\bm{x}_s \in \mathbb{X}_s$ is hyperbolic, i.e., the corresponding linearization of the ms-PHNN~\eqref{eq:phnn},\eqref{eq:hamiltonian}, $\bm{A}(\bm{x}_s) = \nabla_{\bm{x}} \mathrm{f}_{\mathrm{PH}}(\bm{x}_s)$, has eigenvalues $\lambda_1, \dots, \lambda_{n_x} \in \mathbb{C}$ satisfying $\operatorname{Re}\lambda_i \neq 0$ for all $i$, with at least one unstable direction, $\max_i \operatorname{Re}\lambda_i > 0$.
\end{restatable}

Under this assumption, the stable manifold theorem implies that the stable manifold of every other equilibrium has dimension $\dim \leq n_x - 1$.
Since $\mathbb{X}_s$ is finite, the union of the regions of attraction of all other equilibria, $
	\mathcal{W}^s(\mathbb{X}_s) := \left\{ \bm{x}_0 \in \mathbb{R}^{n_x} \;\middle|\; \lim_{t\to\infty} \bm{x}(t; \bm{x}_0) = \bm{x}_s \in \mathbb{X}_s \right\},
$
has Lebesgue measure zero (cf.\ \cref{thm:region_attraction_off_well}).
Hence, we are able to show that the energy dissipation of the ms-PHNN forces almost every trajectory to converge to one of the target minimum-energy equilibria:\footnote{For the separable canonical ms-PHNN, the same guarantee holds under the weaker requirement that the dissipation acts on the momenta only, $\bm{R}_{\bm{p},\bm{\theta}}(\bm{x}) \succ \bm{0}$ (\cref{thm:almost_global_stability_seperable}).}

\begin{restatable}[Almost global asymptotic stability of the minimum-energy equilibria]{theorem}{ThmAlmostGlobalStability}
	\label{thm:almost_global_stability}
	Consider the ms-PHNN~\eqref{eq:phnn},\eqref{eq:hamiltonian}.
	Let \cref{ass:bregmann_second_order_sufficient} and \cref{ass:phnn_equilibria} hold, and suppose that the dissipation satisfies $\bm{R}_{\bm{\theta}}(\bm{x}) \succ 0$ for all $\bm{x} \in \mathbb{R}^{n_x}$.
	Then for all $\bm{x}_0 \in \mathbb{R}^{n_x} \setminus \mathcal{W}^s(\mathbb{X}_s)$, the solution converges to one minimum-energy equilibrium
		$\lim_{t\to\infty} \bm{x}(t; \bm{x}_0) = \bar{\bm{x}}_i \in \bar{\mathbb{X}}
	$ 
	and the region of attraction $\mathcal{W}^s(\mathbb{X}_s)$ of the other equilibria has Lebesgue measure zero.
\end{restatable}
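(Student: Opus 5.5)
The plan is a LaSalle-type argument with $\mathrm{H}_{\bm{\theta}}$ as the energy function, followed by a classification of the possible limits.

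\textbf{Step 1: energy dissipation.} Along any solution, skew-symmetry of $\bm{J}_{\bm{\theta}}$ gives
$\dot{\mathrm{H}}_{\bm{\theta}} = -\nabla_{\bm{x}}\mathrm{H}_{\bm{\theta}}^\top \bm{R}_{\bm{\theta}} \nabla_{\bm{x}}\mathrm{H}_{\bm{\theta}} \le 0$.
Since $\bm{R}_{\bm{\theta}}\succ 0$, equality holds only where $\nabla_{\bm{x}}\mathrm{H}_{\bm{\theta}}=\bm{0}$.

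\textbf{Step 2: bounded orbits.} I need every forward orbit to be bounded, and I expect this to be the main obstacle. Each $D_{\bm{\theta},k}$ is a Bregman divergence of a convex FICNN output, and it is strictly convex near $\bar{\bm{x}}_k$ by \cref{ass:bregmann_second_order_sufficient}. Because a convex function lies above its tangent planes, $D_{\bm{\theta},k}(\bm{x})\ge c_k\lVert\bm{x}-\bar{\bm{x}}_k\rVert - d_k$ for large $\lVert\bm{x}\rVert$. This is the same growth fact that underlies the boundedness result \cref{thm:bounded_off_well_equilibria}. The factors are nonnegative and each grows at least linearly, so the product $\mathrm{H}_{\bm{\theta}}$ is radially unbounded and its sublevel sets are compact. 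Since $\mathrm{H}_{\bm{\theta}}$ is nonincreasing, $\bm{x}(t;\bm{x}_0)$ stays in $\{\mathrm{H}_{\bm{\theta}}\le \mathrm{H}_{\bm{\theta}}(\bm{x}_0)\}$. Hence the solution exists for all $t\ge0$, and its $\omega$-limit set is nonempty, compact, connected and invariant.

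\textbf{Step 3: LaSalle.} By LaSalle's invariance principle, the $\omega$-limit set lies in the largest invariant subset of $\{\nabla_{\bm{x}}\mathrm{H}_{\bm{\theta}}=\bm{0}\}$. Every such point is an equilibrium of \eqref{eq:phnn}. The equilibrium set is $\bar{\mathbb{X}}\cup\mathbb{X}_s$, which is finite by \cref{ass:phnn_equilibria}. A connected subset of a finite set is a single point, so every trajectory converges to exactly one equilibrium. This point is either some $\bar{\bm{x}}_i$ or some $\bm{x}_s\in\mathbb{X}_s$; in the latter case $\bm{x}_0\in\mathcal{W}^s(\mathbb{X}_s)$ by definition. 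So for $\bm{x}_0\notin\mathcal{W}^s(\mathbb{X}_s)$ the limit is some $\bar{\bm{x}}_i$, and \cref{thm:well_stability} supplies the local asymptotic stability. (If non-singularity of $\bm{J}_{\bm{\theta}}-\bm{R}_{\bm{\theta}}$ is needed there, it already follows here: $\bm{v}^\top(\bm{J}_{\bm{\theta}}-\bm{R}_{\bm{\theta}})\bm{v}=-\bm{v}^\top\bm{R}_{\bm{\theta}}\bm{v}<0$ for $\bm{v}\ne0$.)

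\textbf{Step 4: measure zero.} This is \cref{thm:region_attraction_off_well}. In outline, by \cref{ass:phnn_equilibria} each $\bm{x}_s$ is hyperbolic with at least one unstable eigenvalue. The stable manifold theorem then gives a local stable manifold that is a $C^1$ embedded submanifold of dimension at most $n_x-1$, hence Lebesgue-null. The global stable set equals $\bigcup_{n\in\mathbb{N}}\varphi_{-n}(W^s_{\mathrm{loc}})$. Each backward flow map $\varphi_{-n}$ is a $C^1$ diffeomorphism, well defined on the relevant bounded sets by Step 2 and the $C^1$ regularity of the vector field, and therefore maps null sets to null sets. A countable union over $n$ and over the finitely many $\bm{x}_s$ stays null. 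The delicate points are the coercivity in Step 2 and the well-definedness of the backward flow, which can be handled by restricting to compact sublevel sets.
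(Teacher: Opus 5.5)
Your proposal is correct and takes essentially the same route as the paper. Both arguments use the dissipation $\dot{\mathrm{H}}_{\bm{\theta}}\le 0$, compact sublevel sets from coercivity of the product Hamiltonian, and LaSalle's principle to reduce the limit to the finite set $\bar{\mathbb{X}}\cup\mathbb{X}_s$; the measure-zero claim then comes from the stable manifold theorem. Your connectedness argument for the $\omega$-limit set and your countable-union construction of $\mathcal{W}^s(\bm{x}_s)$ are tighter versions of two steps the paper treats only briefly: it appeals to ``continuity of the trajectory'' and cites a global stable manifold theorem. One correction: Step~2 gives forward completeness, not backward completeness, so write $\mathcal{W}^s(\bm{x}_s)$ as the union of preimages of the local stable manifold under the globally defined forward time-$n$ maps (diffeomorphisms onto open images, hence preserving null sets) rather than as images under backward maps.
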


\section{Experiments}
\label{sec:experiments}
\providecommand{\msdpm}[2]{\ensuremath{#1{\scriptstyle\,\pm\,#2}}}

\begin{table}[t]
  \caption{Derivative RMSE as mean $\pm$ std over 10 seeds,
  and the optimiser-step speedup over NODE final loss. Bold marks the best model. "---" means the model did not reach the reference loss.}
  \label{tab:deriv_rmse}
  \centering
  \small
  \setlength{\tabcolsep}{8.1pt}
  \begin{tabular}{llcccc}
    \toprule
    System & Model & train & validation & test & speedup \\
    \midrule
    \multirow{3}{*}{Duffing}
      & NODE     & \msdpm{0.0033}{0.0009} & \msdpm{0.0049}{0.0009} & \msdpm{0.0039}{0.0008} & $1\times$ \\
      & s-PHNN & \msdpm{0.0850}{0.0165} & \msdpm{0.0888}{0.0179} & \msdpm{0.0883}{0.0187} & --- \\
      & ms-PHNN   & \msdpm{\mathbf{0.0008}}{0.0007} & \msdpm{\mathbf{0.0010}}{0.0007} & \msdpm{\mathbf{0.0009}}{0.0007} & \msdpm{{8.5}}{2.1} \\
    \midrule
    \multirow{3}{*}{Asym. Duffing}
      & NODE     & \msdpm{0.0050}{0.0022} & \msdpm{0.0084}{0.0025} & \msdpm{0.0065}{0.0028} & $1\times$ \\
      & s-PHNN & \msdpm{0.0607}{0.0198} & \msdpm{0.0649}{0.0232} & \msdpm{0.0640}{0.0217} & --- \\
      & ms-PHNN   & \msdpm{\mathbf{0.0031}}{0.0015} & \msdpm{\mathbf{0.0041}}{0.0020} & \msdpm{\mathbf{0.0034}}{0.0017} & \msdpm{2.7}{0.6} \\
    \midrule
    \multirow{3}{*}{Four-Magnets}
      & NODE     & \msdpm{\mathbf{0.0104}}{0.0015} & \msdpm{\mathbf{0.0134}}{0.0017} & \msdpm{\mathbf{0.0152}}{0.0020} & $1\times$ \\
      & s-PHNN & \msdpm{0.0736}{0.0054} & \msdpm{0.0838}{0.0063} & \msdpm{0.0842}{0.0071} & --- \\
      & ms-PHNN   & \msdpm{0.0152}{0.0043} & \msdpm{0.0193}{0.0065} & \msdpm{0.0248}{0.0104} & \msdpm{1.8}{0.5}\\
    \bottomrule
  \end{tabular}
\end{table}
\paragraph{Setup.}
We consider three port-Hamiltonian systems with multiple isolated equilibria: the \emph{Duffing oscillator} of Section~\ref{sec:counterexample}; an \emph{asymmetric Duffing oscillator} variant with the reflection symmetry broken, resulting in two differently shaped attractors; and a \emph{pendulum with four magnets} \citep{ChristianChaosIT}, whose critical set is formed of four minima, four saddles and one maximum respectively.
The governing equations and data splits of each system are in Appendix~\ref{ap:experimental}.
All models are trained by derivative fitting (Adam, learning rate of $10^{-3}$, batch size of $128$, and $50{,}000$ steps) over $10$ seeds given $n=50$ training trajectories with horizon $T=100$ (Duffing) and $T=25$ (Four-Magnets).
We compare an unconstrained {NODE}, {s-PHNN}~\cite{roth2025stable}, and our {ms-PHNN} with a separable Hamiltonian, while initializing the location of each equilibria around the ground truth with Gaussian noise and using FICNNs with $\operatorname{softplus}$ activation function and two hidden layers of size 16 and 32 for the Duffing systems and Four-Magnets respectively\footnote{Same hyperparameters for the MLP used in NODE.}.

\paragraph{Results.}
As shown in Table \ref{tab:deriv_rmse} the derivative root-mean-square error (RMSE) of the ms-PHNN for the Duffing system consistently outperforms the s-PHNN as our theoretical contributions let expect. On the Duffing and asymmetric Duffing system, our ms-PHNN even outperforms the unconstrained NODE, while having a slightly decreased performance on the harder Four-Magnets task.
Inspecting the learning curves for the Duffing experiment in \cref{fig:curves_dw}, we find a significant speedup in terms of convergence rate of our ms-PHNN with respect to the final loss level of NODE. The matching inductive bias in terms of qualitative characteristics of the target system makes it easier for the model to learn the true dynamics. The speedup is also recorded for the asymmetric Duffing variant and the four-magnets pendulum in Table \ref{tab:deriv_rmse}, still showing a training convergence speedup of 1.8$\times$-2.7$\times$. In Appendix \ref{ap:experimental} we also conduct a numerical verification of our Assumption \ref{ass:phnn_equilibria} asserting that our ms-PHNN indeed locates all target equilibria of the ground truth systems.

\section{Conclusion}
In this work, we present an extension of asymptotically stable port-Hamiltonian neural networks to model systems with multiple stable equilibria. We achieve this via an explicit multiplicative structure in which each component, derived from a shared FICNN, models a single energy minimum. Our theory asserts local stability and with appropriate assumptions on the qualitative behavior of the systems we are able to show almost-global stability. Our experiments on three systems confirm the extended capabilities and with a matched inductive bias, our ms-PHNN converges faster than unconstrained NODEs.

\section*{Acknowledgments}
This work has been partially funded by the Deutsche Forschungsgemeinschaft (DFG, German Research Foundation) via the TRR 391 Spatio-temporal Statistics for the Transition of Energy and Transport (project 520388526), the research unit Active Learning for Systems and Control (ALeSCo) (project 535860958), and project 519323897.

\bibliographystyle{plainnat}
\bibliography{references}
\appendix
\crefalias{section}{appendix}
\section{Notation}
Throughout the paper, we use the following notation:
\begin{itemize}
	\item $\mathbb{R}$ and $\mathbb{C}$ denote the real and complex numbers, and $\setminus$ denotes set difference.
	\item $\succ$ and $\succeq$ denote the positive definite and positive semidefinite order on symmetric matrices.
	\item $\lVert \cdot \rVert$ denotes the Euclidean norm.
	\item $\dim(\cdot)$ denotes the dimension of a set.
	\item $M[i,j]$ denotes the entry in row $i$ and column $j$ of the matrix $M$.
	\item $\operatorname{Re}(\cdot)$ denotes the real part of a complex number, and $\lambda_i(\cdot)$ the $i$-th eigenvalue.
	\item $\nabla_{\bm{x}}$ and $\nabla_{\bm{x}}^2$ denote the gradient and Hessian with respect to $\bm{x}$.
	\item $C^k(\mathcal{X}, \mathcal{Y})$ denotes the set of $k$-times continuously differentiable functions from $\mathcal{X}$ to $\mathcal{Y}$, with $C^\infty$ denoting infinite differentiability.
	\item $o(\cdot)$ denotes the little-o asymptotic notation.
	\item $\mu(\cdot)$ denotes the Lebesgue measure.
\end{itemize}

\section{Stability Analysis}
\label{sec:appendix_stability}

In this section, we present the stability analysis.
We begin by analyzing properties of the Bregman divergences and the resulting product Hamiltonian.
We then prove that any ms-PHNN must also possess other equilibria.
Next, we analyze the local stability of the designed minimum-energy equilibria.
Following this, we show that, under an assumption on the behavior of the other equilibria, the minimum-energy equilibria are almost globally asymptotically stable.
Finally, we consider the separable canonical ms-PHNN, for which the dissipation requirement for global asymptotic stability can be relaxed to dissipation on the momenta only.

\subsection{Properties of the Bregman Divergence and the Product Hamiltonian}
First, we derive properties of the Bregman divergence computed from the FICNN with softplus activation, and of the resulting product Hamiltonian.

To this end, consider the FICNN~\cite{amos2017input} with the strictly convex softplus activation $\sigma(x) = \ln(1 + e^x)$, with no activation on the final layer
\begin{subequations}\label{eq:ficnn}
\begin{align}
z_1 &= \sigma(L_0 x + b_0), \\
z_{j+1} &= \sigma(W_j z_j + L_j x + b_j), \quad j \in \{1, \dots, N-1\}, \\
z_{N+1} &= W_N z_N + L_N x + b_N, \\
f_{\bm{\theta}}(x) &= z_{N+1}.
\end{align}
\end{subequations}
The parameters $\bm{\theta} = (W_{1:N}, L_{0:N}, b_{0:N})$ comprise the entrywise non-negative weight matrices $W_j \geq 0$, the passthrough matrices $L_j$, and the bias vectors $b_j$ of all layers.

\begin{theorem}[Bounded Gradient of the Bregman divergence]
\label{thm:bregman-grad-bounded}
	Let $f_{\bm{\theta}} \colon \mathbb{R}^{n_x} \to \mathbb{R}$ be the fully input convex neural network of the form~\eqref{eq:ficnn} with Bregman divergence
	\begin{equation*}
	    D_{\bm{\theta}}(\bm{x})
	    =
	    f_{\bm{\theta}}(\bm{x})
	    - f_{\bm{\theta}}(\bar{\bm{x}})
	    - \nabla_{\bm{x}} f_{\bm{\theta}}(\bar{\bm{x}})^{\!\top}
	    \left(\bm{x} - \bar{\bm{x}}\right) .
	\end{equation*}
	Then the gradient of the Bregman divergence is bounded, that is, there exists a constant $D < \infty$, depending on the weight matrices of the FICNN, such that $\left\lVert \nabla_{\bm{x}} D_{\bm{\theta}}(\bm{x}) \right\rVert \le D$ for all $\bm{x} \in \mathbb{R}^{n_x}$.
\end{theorem}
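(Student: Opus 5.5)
The gradient of the Bregman divergence is $\nabla_{\bm{x}} D_{\bm{\theta}}(\bm{x}) = \nabla_{\bm{x}} f_{\bm{\theta}}(\bm{x}) - \nabla_{\bm{x}} f_{\bm{\theta}}(\bar{\bm{x}})$. The second term is a fixed vector. By the triangle inequality,
\[
\lVert \nabla_{\bm{x}} D_{\bm{\theta}}(\bm{x})\rVert \le \sup_{\bm{y}} \lVert \nabla_{\bm{x}} f_{\bm{\theta}}(\bm{y})\rVert + \lVert \nabla_{\bm{x}} f_{\bm{\theta}}(\bar{\bm{x}})\rVert \le 2\sup_{\bm{y}} \lVert \nabla_{\bm{x}} f_{\bm{\theta}}(\bm{y})\rVert .
\]
So it suffices to show that the FICNN~\eqref{eq:ficnn} itself has a globally bounded gradient, i.e.\ that $f_{\bm{\theta}}$ is globally Lipschitz. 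We then take $D$ to be twice that Lipschitz constant.

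To bound $\nabla_{\bm{x}} f_{\bm{\theta}}$, I would differentiate the recursion layer by layer and carry along the Jacobians $G_j := \partial z_j/\partial \bm{x}$. Since $\sigma'(s) = 1/(1+e^{-s}) \in (0,1)$, the chain rule gives the following, with diagonal matrices $\Sigma_j$ whose entries lie in $(0,1)$:
\[
G_1 = \Sigma_0 L_0, \qquad G_{j+1} = \Sigma_j\,(W_j G_j + L_j), \qquad \nabla_{\bm{x}} f_{\bm{\theta}}^\top = W_N G_N + L_N.
\]
Using $\lVert \Sigma_j\rVert_2 \le 1$, induction gives
\[
\lVert G_1\rVert \le \lVert L_0\rVert, \qquad \lVert G_{j+1}\rVert \le \lVert W_j\rVert\,\lVert G_j\rVert + \lVert L_j\rVert .
\]
Unrolling yields an explicit constant
\[
\sup_{\bm{x}}\lVert \nabla_{\bm{x}} f_{\bm{\theta}}(\bm{x})\rVert \le \sum_{j=0}^{N} \lVert L_j\rVert \prod_{i=j+1}^{N} \lVert W_i\rVert ,
\]
which is finite and depends only on the weight matrices. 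The biases do not enter it. Non-negativity of the $W_j$ is not even needed for this bound; it only guarantees convexity, which is used elsewhere.

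The argument has no real obstacle. The one point needing care is that the bound must be uniform in $\bm{x}$. That uniformity comes entirely from the global bound $0<\sigma'<1$ of the softplus, as opposed to activations with unbounded derivative. I would state this explicitly and note that the same argument applies to each output component $k$ of the vector-valued FICNN used for $D_{\bm{\theta},k}$, taking the maximum over $k$.
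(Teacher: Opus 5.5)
Your proof is correct and has the same structure as the paper's: the identity $\nabla_{\bm{x}} D_{\bm{\theta}}(\bm{x}) = \nabla_{\bm{x}} f_{\bm{\theta}}(\bm{x}) - \nabla_{\bm{x}} f_{\bm{\theta}}(\bar{\bm{x}})$, a uniform bound $G$ on the FICNN gradient, and the triangle inequality giving $D = 2G$. The only difference is where the bound on $\nabla_{\bm{x}} f_{\bm{\theta}}$ comes from. The paper cites Lemma~6.3 of Nystr\"om et al.\ for an entrywise recursion $M_p = \lvert L_p\rvert + W_p M_{p-1}$, which implicitly relies on $W_p \geq 0$. You instead derive a self-contained operator-norm recursion via the chain rule and $0<\sigma'<1$, which, as you note, does not need nonnegativity of the $W_j$.
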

\begin{proof}
	Differentiating $D_{\bm{\theta}}(\bm{x})$ with respect to $\bm{x}$ gives
	\begin{equation*}
	    \nabla_{\bm{x}} D_{\bm{\theta}}(\bm{x})
	    =
	    \nabla_{\bm{x}} f_{\bm{\theta}}(\bm{x})
	    - \nabla_{\bm{x}} f_{\bm{\theta}}(\bar{\bm{x}}),
	\end{equation*}
	where the affine term contributes the constant $- \nabla_{\bm{x}} f_{\bm{\theta}}(\bar{\bm{x}})$.
	By \cite[Lemma~6.3]{NYSTROM2023100080}, the first-order partial derivatives of the FICNN $f_{\bm{\theta}}$ with respect to the $k$-th component of $\bm{x}$ ($k \in \{1, \dots, n_x\}$) are bounded by
	\begin{equation*}
	    \left\lvert \frac{\partial f_{\bm{\theta}}(\bm{x})}{\partial x_k} \right\rvert
	    \le
	    M_N[1,k],
	\end{equation*}
	for all $\bm{x} \in \mathbb{R}^{n_x}$.
	The matrices $M_p$ depend on the passthrough matrices and are recursively defined as $M_p = \lvert L_p \rvert + W_p M_{p-1}$ with $M_0 = \lvert L_0 \rvert$, yielding $M_N \in \mathbb{R}^{1 \times n_x}$.
	Hence, each element of the gradient is bounded by the corresponding element of $M_N$, and therefore $\lVert \nabla_{\bm{x}} f_{\bm{\theta}}(\bm{x}) \rVert \le \lVert M_N \rVert =: G$ for all $\bm{x}$.
	Finally, the triangle inequality bounds the gradient of the Bregman divergence by $\left\lVert \nabla_{\bm{x}} D_{\bm{\theta}}(\bm{x}) \right\rVert \le \left\lVert \nabla_{\bm{x}} f_{\bm{\theta}}(\bm{x}) \right\rVert + \left\lVert \nabla_{\bm{x}} f_{\bm{\theta}}(\bar{\bm{x}}) \right\rVert \le 2G$, which yields the claim with $D = 2G$.
\end{proof}

\begin{theorem}[Gradient and Hessian of the product Hamiltonian]\label{thm:grad_hess}
	Consider the product Hamiltonian $\mathrm{H}_{\bm{\theta}}(\bm{x})$ from \eqref{eq:hamiltonian}.
	Its gradient is given by
	\begin{equation}
		\label{eq:grad_general}
		\nabla_{\bm{x}} \mathrm{H}_{\bm{\theta}}(\bm{x})
		= \sum_{k=1}^{m} \left( \prod_{\substack{j=1 \\ j \neq k}}^{m} D_{\bm{\theta},j}(\bm{x}) \right) \nabla_{\bm{x}} D_{\bm{\theta},k}(\bm{x}),
	\end{equation}
	and the Hessian is
	\begin{equation}
		\label{eq:hess_general}
		\nabla_{\bm{x}}^2 \mathrm{H}_{\bm{\theta}}(\bm{x})
		= \sum_{k=1}^{m} \left( \prod_{\substack{j=1 \\ j \neq k}}^{m} D_{\bm{\theta},j}(\bm{x}) \right) \nabla_{\bm{x}}^2 D_{\bm{\theta},k}(\bm{x})
		+ \sum_{k=1}^{m} \sum_{\substack{l=1 \\ l \neq k}}^{m} \left( \prod_{\substack{j=1 \\ j \neq k,l}}^{m} D_{\bm{\theta},j}(\bm{x}) \right)
		\nabla_{\bm{x}} D_{\bm{\theta},k}(\bm{x})\, \nabla_{\bm{x}} D_{\bm{\theta},l}(\bm{x})^\top.
	\end{equation}
	At any minimum-energy equilibrium $\bar{\bm{x}}_i \in \bar{\mathbb{X}}$, these simplify to
	\begin{equation}\label{eq:grad_hess_well}
		\nabla_{\bm{x}} \mathrm{H}_{\bm{\theta}}(\bar{\bm{x}}_i) = \bm{0},
		\qquad
		\nabla_{\bm{x}}^2 \mathrm{H}_{\bm{\theta}}(\bar{\bm{x}}_i)
		= \left( \prod_{\substack{j=1 \\ j \neq i}}^{m} D_{\bm{\theta},j}(\bar{\bm{x}}_i) \right) \nabla_{\bm{x}}^2 D_{\bm{\theta},i}(\bar{\bm{x}}_i) \succeq \bm{0}.
	\end{equation}
	Now, let \cref{ass:bregmann_second_order_sufficient} hold.
	Then at each minimum-energy equilibrium $\bar{\bm{x}}_i \in \bar{\mathbb{X}}$ the Hessian of the Hamiltonian is positive definite,
	\begin{equation}\label{eq:hess_well_pd}
		\nabla_{\bm{x}}^2 \mathrm{H}_{\bm{\theta}}(\bar{\bm{x}}_i)
		= \left( \prod_{\substack{j=1 \\ j \neq i}}^{m} D_{\bm{\theta},j}(\bar{\bm{x}}_i) \right) \nabla_{\bm{x}}^2 D_{\bm{\theta},i}(\bar{\bm{x}}_i) \succ \bm{0}.
	\end{equation}
	Outside the minimum-energy equilibria, $\bm{x} \notin \bar{\mathbb{X}}$, the gradient of the Hamiltonian simplifies to
	\begin{equation}\label{eq:grad_off_well}
		\nabla_{\bm{x}} \mathrm{H}_{\bm{\theta}}(\bm{x})
		= \mathrm{H}_{\bm{\theta}}(\bm{x}) \sum_{k=1}^{m} \frac{\nabla_{\bm{x}} D_{\bm{\theta},k}(\bm{x})}{D_{\bm{\theta},k}(\bm{x})}.
	\end{equation}
\end{theorem}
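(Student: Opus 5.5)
The plan is a direct computation with the product rule, followed by evaluating the resulting formulas at the special points $\bar{\bm{x}}_i$ and at points outside $\bar{\mathbb{X}}$.

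\textbf{Gradient and Hessian formulas.} Each $D_{\bm{\theta},k}$ is $C^\infty$, because $f_{\bm{\theta}}\in C^\infty$ and the affine correction term is smooth. Differentiating $\prod_k D_{\bm{\theta},k}$ with the Leibniz product rule gives \eqref{eq:grad_general} directly. Differentiating each summand of \eqref{eq:grad_general} once more produces two kinds of terms. The first is the coefficient $\prod_{j\neq k}D_{\bm{\theta},j}$ times $\nabla^2_{\bm{x}} D_{\bm{\theta},k}$. The second comes from differentiating that coefficient, which again by the product rule gives $\sum_{l\neq k}\bigl(\prod_{j\neq k,l}D_{\bm{\theta},j}\bigr)\nabla_{\bm{x}} D_{\bm{\theta},l}$. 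Taking the outer product with $\nabla_{\bm{x}} D_{\bm{\theta},k}$ yields the double sum in \eqref{eq:hess_general}. I would only need to order the outer product consistently; the double sum is symmetric in $(k,l)$, so either ordering gives the same matrix.

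\textbf{Evaluation at $\bar{\bm{x}}_i$.} Two facts drive this step: $D_{\bm{\theta},i}(\bar{\bm{x}}_i)=0$, and $\nabla_{\bm{x}} D_{\bm{\theta},i}(\bar{\bm{x}}_i)=\nabla f_{\bm{\theta},i}(\bar{\bm{x}}_i)-\nabla f_{\bm{\theta},i}(\bar{\bm{x}}_i)=\bm{0}$.
\begin{itemize}
\item In \eqref{eq:grad_general}, every term with $k\neq i$ contains the factor $D_{\bm{\theta},i}(\bar{\bm{x}}_i)=0$, and the $k=i$ term contains $\nabla_{\bm{x}} D_{\bm{\theta},i}(\bar{\bm{x}}_i)=\bm{0}$. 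So the gradient vanishes.
\item In the first Hessian sum, only $k=i$ survives.
\item In the double sum, if $i\notin\{k,l\}$ the product contains $D_{\bm{\theta},i}=0$. If $i\in\{k,l\}$, one of the outer-product factors is $\nabla_{\bm{x}} D_{\bm{\theta},i}(\bar{\bm{x}}_i)=\bm{0}$. So the entire double sum vanishes.
\end{itemize}
This gives the expression in \eqref{eq:grad_hess_well}. Positive semidefiniteness follows because the scalar coefficient is $\geq 0$, since Bregman divergences of convex functions are nonnegative, and $\nabla^2 D_{\bm{\theta},i}=\nabla^2 f_{\bm{\theta},i}\succeq\bm{0}$ by convexity of the FICNN output.

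\textbf{Positive definiteness under \cref{ass:bregmann_second_order_sufficient}.} This is the main obstacle: we need the coefficient $\prod_{j\neq i}D_{\bm{\theta},j}(\bar{\bm{x}}_i)$ to be strictly positive, i.e.\ $D_{\bm{\theta},j}(\bar{\bm{x}}_i)>0$ for $j\neq i$. Convexity alone allows this to be zero, for instance if $f_{\bm{\theta},j}$ is affine on the segment between $\bar{\bm{x}}_j$ and $\bar{\bm{x}}_i$. I would handle it with strict convexity.
\begin{itemize}
\item \textbf{Generic case.} The softplus FICNN gives strict convexity along segments where the passthrough is nondegenerate, but the clean route is the one the paper signals: with the regularizer $\varepsilon\lVert\bm{x}-\bar{\bm{x}}_j\rVert^2$, we get $D_{\bm{\theta},j}(\bar{\bm{x}}_i)\ge\varepsilon\lVert\bar{\bm{x}}_i-\bar{\bm{x}}_j\rVert^2>0$, since the points of $\bar{\mathbb{X}}$ are pairwise distinct.
\item \textbf{Assumption-only case.} Here I would argue that \cref{ass:bregmann_second_order_sufficient} makes $\bar{\bm{x}}_j$ a strict local minimizer of $D_{\bm{\theta},j}$. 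Convexity of $D_{\bm{\theta},j}$ then upgrades this to the unique global minimizer: any other zero $\bm{y}$ would, by convexity, force $D_{\bm{\theta},j}=0$ on the whole segment $[\bar{\bm{x}}_j,\bm{y}]$, contradicting strictness near $\bar{\bm{x}}_j$. Hence $D_{\bm{\theta},j}(\bar{\bm{x}}_i)>0$.
\end{itemize}
This convexity argument is the step I expect to carry the weight. Combining the positive coefficient with $\nabla^2 D_{\bm{\theta},i}(\bar{\bm{x}}_i)\succ\bm{0}$ gives \eqref{eq:hess_well_pd}.

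\textbf{Formula off $\bar{\mathbb{X}}$.} The same uniqueness-of-zero argument shows that $D_{\bm{\theta},k}(\bm{x})>0$ for all $k$ whenever $\bm{x}\notin\bar{\mathbb{X}}$. Dividing, we can write $\prod_{j\neq k}D_{\bm{\theta},j}=\mathrm{H}_{\bm{\theta}}/D_{\bm{\theta},k}$, and substituting this into \eqref{eq:grad_general} yields \eqref{eq:grad_off_well}.
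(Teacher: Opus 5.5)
Your proposal is correct and follows the paper's proof essentially step by step: the product rule gives the gradient and Hessian, the vanishing of $D_{\bm{\theta},i}$ and $\nabla_{\bm{x}} D_{\bm{\theta},i}$ at $\bar{\bm{x}}_i$ kills all but one term, and convexity of $D_{\bm{\theta},j}$ upgrades the strict local minimizer to the unique zero, which yields both $D_{\bm{\theta},j}(\bar{\bm{x}}_i)>0$ and the division off $\bar{\mathbb{X}}$. Your explicit segment argument for that uniqueness step is a little more careful than the paper, which states $D_{\bm{\theta},j}(\bar{\bm{x}}_i)>0$ from distinctness before invoking convexity; the regularizer and softplus remarks are not needed under the stated assumption.
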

\begin{proof}
	The gradient in~\eqref{eq:grad_general} and Hessian in~\eqref{eq:hess_general} follow directly from applying the product rule to $\mathrm{H}_{\bm{\theta}}(\bm{x}) = \prod_{k=1}^m D_{\bm{\theta},k}(\bm{x})$.
	At any minimum-energy equilibrium $\bar{\bm{x}}_i \in \bar{\mathbb{X}}$, where the corresponding Bregman divergence and its gradient vanish, $D_{\bm{\theta},i}(\bar{\bm{x}}_i) = 0$ and $\nabla_{\bm{x}} D_{\bm{\theta},i}(\bar{\bm{x}}_i) = \bm{0}$, every term in the gradient vanishes, and in the Hessian every term vanishes except the one for $k=i$, yielding~\eqref{eq:grad_hess_well}.
	
	By \cref{ass:bregmann_second_order_sufficient}, $\nabla_{\bm{x}}^2 D_{\bm{\theta},i}(\bar{\bm{x}}_i) \succ \bm{0}$, and since $\bar{\bm{x}}_i \neq \bar{\bm{x}}_j$ for $j \neq i$, each remaining factor satisfies $D_{\bm{\theta},j}(\bar{\bm{x}}_i) > 0$, so the product $\prod_{\substack{j=1 \\ j \neq i}}^{m} D_{\bm{\theta},j}(\bar{\bm{x}}_i)$ is a positive scalar, giving the positive definite Hessian in~\eqref{eq:hess_well_pd}.
	Moreover, by \cref{ass:bregmann_second_order_sufficient}, each Bregman divergence $D_{\bm{\theta},k}$ satisfies the second-order sufficient conditions for a strict local minimum at $\bar{\bm{x}}_k \in \bar{\mathbb{X}}$; by convexity of the Bregman divergence, this local minimizer is in fact the unique global minimizer with $D_{\bm{\theta},k}(\bar{\bm{x}}_k) = 0$, and therefore $D_{\bm{\theta},k}(\bm{x}) > 0$ strictly for all $\bm{x} \notin \bar{\mathbb{X}}$.
	For $\bm{x} \notin \bar{\mathbb{X}}$, we can hence divide each term in~\eqref{eq:grad_general} by $D_{\bm{\theta},k}(\bm{x}) > 0$ and use $\prod_{\substack{j=1 \\ j \neq k}}^{m} D_{\bm{\theta},j}(\bm{x}) = \mathrm{H}_{\bm{\theta}}(\bm{x})/D_{\bm{\theta},k}(\bm{x})$ to obtain~\eqref{eq:grad_off_well}.
\end{proof}

\begin{theorem}[Radial Unboundedness of the Bregman divergence and product Hamiltonian]
	\label{thm:bregmann_radial_unbounded}
	Let each Bregman divergence $D_{\bm{\theta},k} \in C^2(\mathbb{R}^{n_x}, \mathbb{R})$, $k \in \{1, \dots, m\}$, be convex, and let \cref{ass:bregmann_second_order_sufficient} hold.
	Then $D_{\bm{\theta},k}$ is coercive/radially unbounded, i.e., $D_{\bm{\theta},k}(\bm{x}) \to \infty$ as $\lVert \bm{x}\rVert \to \infty$.
	Consequently, the product Hamiltonian $\mathrm{H}_{\bm{\theta}} = \prod_{k=1}^m D_{\bm{\theta},k}$ in~\eqref{eq:hamiltonian} is also coercive/radially unbounded, i.e., $\mathrm{H}_{\bm{\theta}}(\bm{x}) \to \infty$ as $\lVert \bm{x}\rVert \to \infty$.
\end{theorem}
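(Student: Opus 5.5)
The argument uses only convexity of $D_{\bm{\theta},k}$ and the strict local minimum at $\bar{\bm{x}}_k$ given by \cref{ass:bregmann_second_order_sufficient}. We first show that each single factor grows at least linearly away from its minimizer, and then pass to the product.

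\emph{Step 1: a positive minimum on a small sphere.} Fix $k$. Since $D_{\bm{\theta},k}\in C^2$, $\nabla_{\bm{x}} D_{\bm{\theta},k}(\bar{\bm{x}}_k)=\bm{0}$ and $\nabla_{\bm{x}}^2 D_{\bm{\theta},k}(\bar{\bm{x}}_k)\succ\bm{0}$, continuity of the Hessian gives a radius $r>0$ with $\nabla_{\bm{x}}^2 D_{\bm{\theta},k}(\bm{x})\succeq \tfrac{\lambda}{2}\bm{I}$ on the closed ball $B_r(\bar{\bm{x}}_k)$, where $\lambda>0$ is the smallest eigenvalue at $\bar{\bm{x}}_k$. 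Taylor's theorem with integral remainder, using $D_{\bm{\theta},k}(\bar{\bm{x}}_k)=0$ and the vanishing gradient, then yields $D_{\bm{\theta},k}(\bm{x})\ge \tfrac{\lambda}{4}\lVert\bm{x}-\bar{\bm{x}}_k\rVert^2$ on this ball. In particular,
\[
c:=\min_{\lVert \bm{x}-\bar{\bm{x}}_k\rVert=r} D_{\bm{\theta},k}(\bm{x})\ge \tfrac{\lambda}{4}r^2>0 .
\]

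\emph{Step 2: linear growth by convexity.} Take $\bm{x}$ with $\lVert\bm{x}-\bar{\bm{x}}_k\rVert=s\ge r$, and set $\bm{y}=\bar{\bm{x}}_k+\tfrac{r}{s}(\bm{x}-\bar{\bm{x}}_k)$, which lies on the sphere of Step 1. Writing $\bm{y}$ as a convex combination of $\bar{\bm{x}}_k$ and $\bm{x}$ and using convexity together with $D_{\bm{\theta},k}(\bar{\bm{x}}_k)=0$ gives
\[
c\le D_{\bm{\theta},k}(\bm{y})\le \tfrac{r}{s}D_{\bm{\theta},k}(\bm{x}).
\]
Hence $D_{\bm{\theta},k}(\bm{x})\ge \tfrac{c}{r}\lVert\bm{x}-\bar{\bm{x}}_k\rVert$ whenever $\lVert\bm{x}-\bar{\bm{x}}_k\rVert\ge r$. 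Since $\lVert\bm{x}-\bar{\bm{x}}_k\rVert\ge\lVert\bm{x}\rVert-\lVert\bar{\bm{x}}_k\rVert$, this bound tends to $\infty$ as $\lVert\bm{x}\rVert\to\infty$.

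\emph{Step 3: the product.} Choose $\rho_k$ so that $D_{\bm{\theta},k}(\bm{x})\ge 1$ whenever $\lVert\bm{x}\rVert\ge\rho_k$, and let $\rho=\max_k\rho_k$. For $\lVert\bm{x}\rVert\ge\rho$ every factor is at least $1$, so
\[
\mathrm{H}_{\bm{\theta}}(\bm{x})\ge D_{\bm{\theta},1}(\bm{x})\to\infty .
\]
This uses only the nonnegativity of the factors and needs no interaction between them.

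The only delicate point is Step 1. Convexity alone does not give coercivity, because a convex function can be flat along a ray; strictness of the minimum is what supplies the positive constant $c$. Under \cref{ass:bregmann_second_order_sufficient} this is immediate from the $C^2$ Taylor bound. A compactness argument over the sphere would also give $c>0$, but it requires first knowing that the minimum is strict on the whole sphere, which the Taylor bound already provides. Note that the linear lower bound from Step 2 is consistent with \cref{thm:bregman-grad-bounded}, which bounds the gradient of the softplus FICNN divergence; that bound rules out anything faster than linear growth, so Step 2 cannot be strengthened to quadratic growth.
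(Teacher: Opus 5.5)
Your proof is correct, but it reaches coercivity by a different route than the paper.

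The paper first combines strictness of the minimum with convexity to conclude that $\bar{\bm{x}}_k$ is the unique global minimizer. Hence the zero sublevel set $\{\bm{x} \mid D_{\bm{\theta},k}(\bm{x}) \le 0\} = \{\bar{\bm{x}}_k\}$ is nonempty and bounded. It then cites Rockafellar's Corollary~8.7.1: for a closed proper convex function, one nonempty bounded sublevel set forces every sublevel set to be bounded.

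You prove the needed special case by hand. The $C^2$ Taylor bound gives a positive minimum $c$ on a small sphere around $\bar{\bm{x}}_k$. Convexity along rays from $\bar{\bm{x}}_k$ then gives the explicit linear bound $D_{\bm{\theta},k}(\bm{x}) \ge (c/r)\lVert \bm{x} - \bar{\bm{x}}_k\rVert$ outside the ball. This also delivers positivity away from $\bar{\bm{x}}_k$ without a separate uniqueness argument.

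The trade-off is as follows. The citation route is shorter and needs only that the minimizer be strict; the Hessian condition is used for nothing else. Your route is self-contained and quantitative. The explicit rate would, for instance, let one write the radius in \cref{thm:bounded_equlibia} in closed form, since that proof only needs $D_{\bm{\theta},k}(\bm{x}) > G_k \lVert \bar{\bm{x}}_k - \bm{x}^{\mathrm{c}}\rVert$.

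Your product step is the same as the paper's. You make explicit the nonnegativity of the factors, which the paper leaves implicit.
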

\begin{proof}
	By \cref{ass:bregmann_second_order_sufficient}, $\nabla_{\bm{x}}^2 D_{\bm{\theta},k}(\bar{\bm{x}}_k) \succ 0$, so the minimizer $\bar{\bm{x}}_k$ is a strict local minimizer.
	By convexity of $D_{\bm{\theta},k}$, it is therefore the unique global minimizer, and hence $D_{\bm{\theta},k}(\bm{x}) > 0$ for all $\bm{x} \neq \bar{\bm{x}}_k$.
	Hence, consider the corresponding zero sublevel set, where $D_{\bm{\theta},k}(\bar{\bm{x}}_k)=0$, given by $\left\{ \bm{x}\in\mathbb{R}^{n_x} \mid D_{\bm{\theta},k}(\bm{x})\leq 0 \right\} = \{\bar{\bm{x}}_k\}$, which is nonempty and bounded.

	Since $D_{\bm{\theta},k}$ is continuous and convex, it is a closed proper convex function.
	By~\cite[Corollary~8.7.1]{rockafellar1970convex}, boundedness and nonemptiness of one sublevel set imply boundedness of every sublevel set.
	Therefore, $D_{\bm{\theta},k}$ is coercive/radially unbounded, with $D_{\bm{\theta},k}(\bm{x}) \to \infty$ as $\lVert \bm{x} \rVert \to \infty$.
	
	Finally, we consider the product Hamiltonian: each Bregman divergence is radially unbounded, so whenever $\lVert\bm{x}\rVert\to\infty$ then $D_{\bm{\theta},k}(\bm{x})\to \infty$ for every $k = 1,...,m$.
	Hence the product Hamiltonian must satisfy $\mathrm{H}_{\bm{\theta}} = \prod_{k=1}^m D_{\bm{\theta},k} \to \infty$ as $\lVert\bm{x}\rVert \to \infty$, so it is coercive/radially unbounded.
\end{proof}

\begin{theorem}[Boundedness of Hamilton critical points]\label{thm:bounded_equlibia}
	Consider the Hamiltonian~\eqref{eq:hamiltonian}, where each factor $D_{\bm{\theta},k}$ is the Bregman divergence generated from an FICNN~\eqref{eq:ficnn}.
	There exists a compact region $\Omega_r = \left\{ \bm{x} \in \mathbb{R}^{n_x} \middle| \lVert \bm{x}\rVert \leq r \right\}$ for some $r< \infty$, such that all equilibria $\nabla_{\bm{x}} \mathrm{H}_{\bm{\theta}}(\bm{x}) = \bm{0}$ are contained in this region, i.e.,
	\begin{equation*}
		\left\{ \bm{x} \in \mathbb{R}^{n_x} \mid
		\nabla_{\bm{x}} \mathrm{H}_{\bm{\theta}}(\bm{x}) = \bm{0} \right\}
		\subseteq \Omega_r .
	\end{equation*}
\end{theorem}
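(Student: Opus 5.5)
The plan is a radial argument: I will show that outside a large ball the gradient of $\mathrm{H}_{\bm{\theta}}$ has a strictly positive inner product with $\bm{x}$, so it cannot vanish there. I will work under \cref{ass:bregmann_second_order_sufficient}, as in the preceding results, because coercivity of the factors is essential and \cref{thm:bregmann_radial_unbounded} supplies it. The finitely many points of $\bar{\mathbb{X}}$ trivially lie in any ball of radius $r \ge \max_k \lVert \bar{\bm{x}}_k\rVert$. It therefore suffices to treat $\bm{x} \notin \bar{\mathbb{X}}$. There $\mathrm{H}_{\bm{\theta}}(\bm{x})>0$ and every $D_{\bm{\theta},k}(\bm{x})>0$, so by \eqref{eq:grad_off_well} of \cref{thm:grad_hess}, $\nabla_{\bm{x}}\mathrm{H}_{\bm{\theta}}(\bm{x})=\bm{0}$ holds if and only if $S(\bm{x}) := \sum_{k=1}^m \nabla_{\bm{x}} D_{\bm{\theta},k}(\bm{x})/D_{\bm{\theta},k}(\bm{x}) = \bm{0}$.

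The first key step is a convexity lower bound. Each $D_{\bm{\theta},k}$ is convex with $D_{\bm{\theta},k}(\bar{\bm{x}}_k)=0$, so the first-order convexity inequality gives
$\nabla_{\bm{x}} D_{\bm{\theta},k}(\bm{x})^\top(\bm{x}-\bar{\bm{x}}_k) \ge D_{\bm{\theta},k}(\bm{x}) - D_{\bm{\theta},k}(\bar{\bm{x}}_k) = D_{\bm{\theta},k}(\bm{x})$.
Dividing by $D_{\bm{\theta},k}(\bm{x})>0$ gives $\nabla_{\bm{x}} D_{\bm{\theta},k}(\bm{x})^\top(\bm{x}-\bar{\bm{x}}_k)/D_{\bm{\theta},k}(\bm{x}) \ge 1$ for each $k$.

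The main obstacle is that the $m$ terms are centred at different points $\bar{\bm{x}}_k$, so I cannot pair $S(\bm{x})$ with a single common direction directly. I handle this by pairing with $\bm{x}$ itself and absorbing the shifts:
\begin{equation*}
S(\bm{x})^\top \bm{x} = \sum_{k=1}^m \frac{\nabla_{\bm{x}} D_{\bm{\theta},k}(\bm{x})^\top(\bm{x}-\bar{\bm{x}}_k)}{D_{\bm{\theta},k}(\bm{x})} + \sum_{k=1}^m \frac{\nabla_{\bm{x}} D_{\bm{\theta},k}(\bm{x})^\top\bar{\bm{x}}_k}{D_{\bm{\theta},k}(\bm{x})} \ge m - \sum_{k=1}^m \frac{D\,\lVert\bar{\bm{x}}_k\rVert}{D_{\bm{\theta},k}(\bm{x})}.
\end{equation*}
Here the shift terms are controlled by the uniform gradient bound $\lVert\nabla_{\bm{x}} D_{\bm{\theta},k}\rVert\le D$ from \cref{thm:bregman-grad-bounded}. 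One takes the maximum of the constants over the $m$ outputs of the shared FICNN, or equivalently applies that theorem to each output.

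Finally, by \cref{thm:bregmann_radial_unbounded} every $D_{\bm{\theta},k}(\bm{x})\to\infty$ as $\lVert\bm{x}\rVert\to\infty$. Hence there is $r<\infty$ with $r \ge \max_k\lVert\bar{\bm{x}}_k\rVert$ such that $D_{\bm{\theta},k}(\bm{x}) > m\,D\,\lVert\bar{\bm{x}}_k\rVert$ for all $k$ whenever $\lVert\bm{x}\rVert>r$. For such $\bm{x}$ the remainder sum is strictly less than $1$, so $S(\bm{x})^\top\bm{x} > m-1 \ge 0$. Thus $S(\bm{x})\neq\bm{0}$, and consequently $\nabla_{\bm{x}}\mathrm{H}_{\bm{\theta}}(\bm{x}) = \mathrm{H}_{\bm{\theta}}(\bm{x})S(\bm{x})\neq\bm{0}$. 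Therefore every critical point of $\mathrm{H}_{\bm{\theta}}$ lies in $\Omega_r$, which proves the claim.
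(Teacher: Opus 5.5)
Your proposal is correct and follows essentially the same route as the paper. Both use the first-order convexity bound $\nabla_{\bm{x}} D_{\bm{\theta},k}(\bm{x})^\top(\bm{x}-\bar{\bm{x}}_k)\ge D_{\bm{\theta},k}(\bm{x})$, the uniform gradient bound of \cref{thm:bregman-grad-bounded} on the shift terms, and the coercivity from \cref{thm:bregmann_radial_unbounded} to show that a radial inner product with $\nabla_{\bm{x}}\mathrm{H}_{\bm{\theta}}$ is strictly positive outside a ball. The only difference is cosmetic: you pair with $\bm{x}$, whereas the paper pairs with $\bm{x}-\bm{x}^{\mathrm{c}}$ for the centroid $\bm{x}^{\mathrm{c}}$ of $\bar{\mathbb{X}}$; your explicit appeal to \cref{ass:bregmann_second_order_sufficient} also correctly makes visible a hypothesis the paper uses only implicitly.
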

\begin{proof}
	Let $\bm{x}^{\mathrm{c}} = \tfrac{1}{m}\sum_{k=1}^m \bar{\bm{x}}_k$ denote the centroid of the
	minimum-energy equilibria.
	For the proof, we consider the inner product of $\nabla_{\bm{x}} \mathrm{H}_{\bm{\theta}}(\bm{x}) (\bm{x}-\bm{x}^{\mathrm{c}})$, which needs to be zero for all equilibria $\bm{x}$ with $\nabla_{\bm{x}} \mathrm{H}_{\bm{\theta}}(\bm{x}) = \bm{0}$.

	For $\bm{x} \notin \bar{\mathbb{X}}$, the gradient of the Hamiltonian is given by
	\begin{equation*}
		\nabla_{\bm{x}} \mathrm{H}_{\bm{\theta}}(\bm{x}) =  \mathrm{H}_{\bm{\theta}}(\bm{x}) \sum_{k=1}^m \frac{\nabla_{\bm{x}} D_{\bm{\theta},k}(\bm{x})}{D_{\bm{\theta},k}(\bm{x})}
	\end{equation*}
	Next, we take the inner product of this gradient and $\bm{x} - \bm{x}^{\mathrm{c}}$ yields
	\begin{equation*}
		\nabla_{\bm{x}} \mathrm{H}_{\bm{\theta}}(\bm{x})^\top (\bm{x} - \bm{x}^{\mathrm{c}})
		= \mathrm{H}_{\bm{\theta}}(\bm{x})
		\sum_{k=1}^m \frac{\nabla_{\bm{x}} D_{\bm{\theta},k}(\bm{x})^\top (\bm{x} - \bm{x}^{\mathrm{c}})}{D_{\bm{\theta},k}(\bm{x})}.
	\end{equation*}
	Splitting $\bm{x} - \bm{x}^{\mathrm{c}} = (\bm{x} - \bar{\bm{x}}_k) + (\bar{\bm{x}}_k - \bm{x}^{\mathrm{c}})$
	in each numerator yields
	\begin{equation*}
		\nabla_{\bm{x}} \mathrm{H}_{\bm{\theta}}(\bm{x})^\top (\bm{x} - \bm{x}^{\mathrm{c}})
		= \mathrm{H}_{\bm{\theta}}(\bm{x})
		\sum_{k=1}^m \frac{
			\nabla_{\bm{x}} D_{\bm{\theta},k}(\bm{x})^\top (\bm{x} - \bar{\bm{x}}_k)
			+ \nabla_{\bm{x}} D_{\bm{\theta},k}(\bm{x})^\top (\bar{\bm{x}}_k - \bm{x}^{\mathrm{c}})
		}{D_{\bm{\theta},k}(\bm{x})}.
	\end{equation*}
	The first-order characterization of convexity~\cite[Section~3.1.3]{vandenberghe2004convex} is $f(\bm{y})\geq f(\bm{x})+\nabla f(\bm{x})^\top(\bm{y}-\bm{x})$.
	Applying this to $D_{\bm{\theta},k}$ with $\bm{y}=\bar{\bm{x}}_k$ and using $D_{\bm{\theta},k}(\bar{\bm{x}}_k)=0$ yields
	$\nabla_{\bm{x}} D_{\bm{\theta},k}(\bm{x})^\top (\bm{x}-\bar{\bm{x}}_k) \geq D_{\bm{\theta},k}(\bm{x})$.
	In addition, for each $k \in \{1, \dots, m\}$, by \cref{thm:bregman-grad-bounded} the gradient of the Bregman divergence $D_{\bm{\theta},k}$ is bounded, $\lVert \nabla_{\bm{x}} D_{\bm{\theta},k}(\bm{x}) \rVert \leq G_k$ for some finite $G_k$, and therefore $\nabla_{\bm{x}} D_{\bm{\theta},k}(\bm{x})^\top (\bar{\bm{x}}_k - \bm{x}^{\mathrm{c}}) \geq - G_k \lVert \bar{\bm{x}}_k - \bm{x}^{\mathrm{c}}\rVert$.
	Hence, 
	\begin{align*}
		\nabla_{\bm{x}} \mathrm{H}_{\bm{\theta}}(\bm{x})^\top
		(\bm{x} - \bm{x}^{\mathrm{c}})
		&\geq
		\mathrm{H}_{\bm{\theta}}(\bm{x})
		\sum_{k=1}^m
		\frac{
			D_{\bm{\theta},k}(\bm{x})
			-
			G_k \lVert \bar{\bm{x}}_k - \bm{x}^{\mathrm{c}}\rVert
		}{
			D_{\bm{\theta},k}(\bm{x})
		}
		\\
		&=
		\mathrm{H}_{\bm{\theta}}(\bm{x})
		\sum_{k=1}^m
		\left(
			1
			-
			\frac{
				G_k \lVert \bar{\bm{x}}_k - \bm{x}^{\mathrm{c}}\rVert
			}{
				D_{\bm{\theta},k}(\bm{x})
			}
		\right).
	\end{align*}
	Each Bregman divergence $D_{\bm{\theta},k}$ is radially unbounded by \cref{thm:bregmann_radial_unbounded}, while $G_k \lVert \bar{\bm{x}}_k - \bm{x}^{\mathrm{c}}\rVert$ is a fixed constant.
	Hence, there exists $r < \infty$ such that $D_{\bm{\theta},k}(\bm{x}) > G_k \lVert \bar{\bm{x}}_k - \bm{x}^{\mathrm{c}}\rVert$ for all $k \in \{1, \dots, m\}$ whenever $\lVert\bm{x}\rVert > r$. 
	We choose $r$ additionally so that $r \geq \lVert \bm{x}^{\mathrm{c}} \rVert$.
	For such $\bm{x}$, every summand in the above bound is strictly positive, and since $\mathrm{H}_{\bm{\theta}}(\bm{x}) > 0$ for $\bm{x} \notin \bar{\mathbb{X}}$, we obtain $\nabla_{\bm{x}} \mathrm{H}_{\bm{\theta}}(\bm{x})^\top (\bm{x} - \bm{x}^{\mathrm{c}}) > 0$.
	Since $\lVert\bm{x}\rVert > r \geq \lVert \bm{x}^{\mathrm{c}}\rVert$ implies $\bm{x} \neq \bm{x}^{\mathrm{c}}$, this rules out $\nabla_{\bm{x}} \mathrm{H}_{\bm{\theta}}(\bm{x}) = \bm{0}$, so $\bm{x}$ cannot be an equilibrium.
	Therefore, every equilibrium satisfies $\lVert\bm{x}\rVert \leq r$, including $\bm{x}^{\mathrm{c}}$ itself, which lies in $\Omega_r$ by construction and is therefore bounded.
\end{proof}
While this result as stated relies on the bounded gradients from~\Cref{thm:bregman-grad-bounded}, it holds analogously when adding the strongly convex regularization term $\varepsilon\lVert \bm{x}-\bar{\bm{x}}_k\rVert^2$ to the Bregman divergence~\citep{roth2025stable}, since $\lVert\nabla_{\bm{x}} D_{\bm{\theta},k}(\bm{x})\rVert/D_{\bm{\theta},k}(\bm{x}) \to 0$ as $\lVert\bm{x}\rVert\to\infty$ continues to hold.

\subsection{Local stability}

We now show that, under Assumption~\ref{ass:bregmann_second_order_sufficient}, each minimum-energy equilibrium of the ms-PHNN is locally stable, and, under strict dissipation, asymptotically stable.

\ThmWellStability*
\begin{proof}
	We base the proof on a standard Lyapunov argument~\cite[Theorem~4.1]{khalil2002nonlinear}.

	By \cref{ass:bregmann_second_order_sufficient} and \cref{thm:grad_hess}, the gradient of the Hamiltonian vanishes at the minimum-energy equilibrium $\bar{\bm{x}}_i \in \bar{\mathbb{X}}$, $\nabla_{\bm{x}} \mathrm{H}_{\bm{\theta}}(\bar{\bm{x}}_i) = \bm{0}$, and the Hessian of the Hamiltonian there is positive definite, $\nabla_{\bm{x}}^2 \mathrm{H}_{\bm{\theta}}(\bar{\bm{x}}_i) \succ \bm{0}$.
	Since $\nabla_{\bm{x}} \mathrm{H}_{\bm{\theta}}(\bar{\bm{x}}_i) = \bm{0}$, the dynamics~\eqref{eq:phnn},\eqref{eq:hamiltonian} give $\dot{\bm{x}} = \left[\bm{J}_{\bm{\theta}}(\bar{\bm{x}}_i) - \bm{R}_{\bm{\theta}}(\bar{\bm{x}}_i)\right]\nabla_{\bm{x}} \mathrm{H}_{\bm{\theta}}(\bar{\bm{x}}_i) = \bm{0}$, so $\bar{\bm{x}}_i$ is indeed an equilibrium of the ms-PHNN~\eqref{eq:phnn}.
	
	In addition, all minimum-energy equilibria $\bar{\bm{x}}_i$ satisfy the second-order sufficient conditions for a strict local minimum of the Hamiltonian with minimal value $\mathrm{H}_{\bm{\theta}}(\bar{\bm{x}}_i) = 0$.
	Due to the continuity of the Hessian of the Hamiltonian and $\nabla_{\bm{x}}^2 \mathrm{H}_{\bm{\theta}}(\bar{\bm{x}}_i) \succ \bm{0}$, there exists a neighborhood $\mathcal{N}_i$ such that $\bar{\bm{x}}_i$ is the only critical point of $\mathrm{H}_{\bm{\theta}}$, and hence, by $\bm{J}_{\bm{\theta}}(\bm{x}) - \bm{R}_{\bm{\theta}}(\bm{x})$ being non-singular the only equilibrium of the ms-PHNN in $\mathcal{N}_i$, with $\mathrm{H}_{\bm{\theta}}(\bm{x}) > 0$ for all $\bm{x} \in \mathcal{N}_i \setminus \left\{ \bar{\bm{x}}_i\right\}$.

	The change in the Hamiltonian along the flow of the ms-PHNN is
	\begin{equation*}
		\dot{\mathrm{H}}_{\bm{\theta}}(\bm{x})
		= -\nabla_{\bm{x}} \mathrm{H}_{\bm{\theta}}(\bm{x})^\top \bm{R}_{\bm{\theta}}(\bm{x})\, \nabla_{\bm{x}} \mathrm{H}_{\bm{\theta}}(\bm{x}) \leq 0
	\end{equation*}
	for all $\bm{x} \in \mathcal{N}_i$.
	Therefore, $\mathrm{H}_{\bm{\theta}}$ is a Lyapunov function in the neighborhood $\mathcal{N}_i$, proving the local stability of the equilibrium $\bar{\bm{x}}_i$ in the neighborhood $\mathcal{N}_i$.

	Moreover, with strict dissipation $\bm{R}_{\bm{\theta}}(\bm{x}) \succ \bm{0}$ for all $\bm{x}\in \mathbb{R}^{n_x}$, the change in the Hamiltonian along the flow of the ms-PHNN becomes 
	$\dot{\mathrm{H}}_{\bm{\theta}}(\bm{x}) < 0$ for all $\bm{x} \in \mathcal{N}_i \setminus \{\bar{\bm{x}}_i\}$, 
	since $\bm{R}_{\bm{\theta}}(\bm{x}) \succ \bm{0}$ and $\nabla_{\bm{x}} \mathrm{H}_{\bm{\theta}}(\bm{x}) \neq \bm{0}$ on $\mathcal{N}_i \setminus \{\bar{\bm{x}}_i\}$. 
	Hence, $\mathrm{H}_{\bm{\theta}}$ is a strict local Lyapunov function and $\bar{\bm{x}}_i$ is asymptotically stable.
\end{proof}

\subsection{Analysis of other equilibria}

The product Hamiltonian possesses multiple strict minima, and therefore, by mathematical necessity, must also possess other critical points, which are therefore equilibria of the ms-PHNN.
We now establish the existence of these equilibria for the proposed ms-PHNN.

\begin{theorem}[Existence of other equilibria]\label{thm:existance_off_well_equlibia}
	Consider the ms-PHNN~\eqref{eq:phnn},\eqref{eq:hamiltonian} with at least minimum-energy equilibria $m \geq 2$.
	Then there exists a third equilibrium with $\nabla_{\bm{x}} \mathrm{H}_{\bm{\theta}}(\bm{x}_s) = \bm{0}$ and $\bm{x}_s \notin \bar{\mathbb{X}} $.

	In addition, this equilibrium $ \bm{x}_s$ is not a local minimizer of the Hamiltonian $\mathrm{H}_{\bm{\theta}}$. 
	That is, every neighborhood of $\bm{x}_s$ contains a point $\bm{x}$ with $\mathrm{H}_{\bm{\theta}}(\bm{x}) < \mathrm{H}_{\bm{\theta}}(\bm{x}_s)$. 
	Because $\bm{x}_s$ is a critical point and not a minimum, the smallest eigenvalue of the (real-valued symmetric) Hessian is non-positive 
    \begin{equation*}
        \lambda_{\min}\!\left( \nabla_{\bm{x}}^2 \mathrm{H}_{\bm{\theta}}(\bm{x}_s) \right) \leq 0.
    \end{equation*}
	
	However, this equilibrium $\bm{x}_s$ is not asymptotically stable, that is, for each $\delta > 0$, there exists $ \bm{x}_0$ with $\lVert  \bm{x}_0 -\bm{x}_s \rVert \leq \delta$, such that $\lim_{t\to \infty}\bm{x}(t; \bm{x}_0)\neq \bm{x}_s$.
	
	Moreover, this equilibrium $\bm{x}_s$ solves the stationarity equation
	\begin{equation}\label{eq:crit_eq}
		\sum_{k=1}^{m} \frac{\nabla_{\bm{x}} D_{\bm{\theta},k}(\bm{x}_s)}{D_{\bm{\theta},k}(\bm{x}_s)} = \bm{0}.
	\end{equation}
\end{theorem}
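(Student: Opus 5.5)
We argue via the mountain pass theorem applied to $\mathrm{H}_{\bm{\theta}}$, which is $C^2$ since $f_{\bm{\theta}}\in C^\infty$.

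\textbf{Setup and Palais--Smale.} Under \cref{ass:bregmann_second_order_sufficient}, \cref{thm:grad_hess} shows that $\bar{\bm{x}}_1$ and $\bar{\bm{x}}_2$ are strict local minimizers with $\mathrm{H}_{\bm{\theta}}=0$. Choose $\rho>0$ smaller than $\lVert \bar{\bm{x}}_1-\bar{\bm{x}}_2\rVert$ and small enough that $\bar{\bm{x}}_1$ is the only zero of $\mathrm{H}_{\bm{\theta}}$ in the closed ball $\bar B_\rho(\bar{\bm{x}}_1)$. Then $\mathrm{H}_{\bm{\theta}}>0$ on the sphere $\partial B_\rho(\bar{\bm{x}}_1)$, and by compactness $\inf_{\partial B_\rho}\mathrm{H}_{\bm{\theta}}=:\alpha>0=\max\{\mathrm{H}_{\bm{\theta}}(\bar{\bm{x}}_1),\mathrm{H}_{\bm{\theta}}(\bar{\bm{x}}_2)\}$.

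Palais--Smale holds for the following reason. By \cref{thm:bregmann_radial_unbounded}, $\mathrm{H}_{\bm{\theta}}$ is coercive, so every sequence with bounded $\mathrm{H}_{\bm{\theta}}$ is bounded. Continuity of $\nabla_{\bm{x}}\mathrm{H}_{\bm{\theta}}$ then gives a convergent subsequence whose limit is critical.

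\textbf{Mountain pass critical point.} Let $\Gamma$ be the set of continuous paths from $\bar{\bm{x}}_1$ to $\bar{\bm{x}}_2$ and set $c=\inf_{\gamma\in\Gamma}\max_t \mathrm{H}_{\bm{\theta}}(\gamma(t))$. Every such path crosses the sphere, so $c\ge\alpha>0$. The mountain pass theorem (Ambrosetti--Rabinowitz) then yields $\bm{x}_s$ with $\nabla_{\bm{x}}\mathrm{H}_{\bm{\theta}}(\bm{x}_s)=\bm{0}$ and $\mathrm{H}_{\bm{\theta}}(\bm{x}_s)=c>0$. Because $\mathrm{H}_{\bm{\theta}}$ vanishes on $\bar{\mathbb{X}}$, we get $\bm{x}_s\notin\bar{\mathbb{X}}$. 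The dynamics give $\mathrm{f}_{\mathrm{PH}}(\bm{x}_s)=\bm{0}$, so $\bm{x}_s$ is an equilibrium.

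Dividing by $\mathrm{H}_{\bm{\theta}}(\bm{x}_s)>0$ in \eqref{eq:grad_off_well} gives \eqref{eq:crit_eq}.

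\textbf{Not a local minimum.} This is the main obstacle, because the mountain pass point can a priori be a degenerate local minimizer. I would invoke the refined version of the theorem due to Hofer and Pucci--Serrin: in finite dimensions, among the critical points at level $c$ there is one that is not a local minimizer. Taking $\bm{x}_s$ to be that point, every neighborhood contains points with lower energy. The Hessian bound then follows by contrapositive, since $\nabla^2_{\bm{x}}\mathrm{H}_{\bm{\theta}}(\bm{x}_s)\succ\bm{0}$ would make $\bm{x}_s$ a strict local minimizer.

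\textbf{Not asymptotically stable.} Fix $\delta>0$. Pick $\bm{x}_0$ with $\lVert\bm{x}_0-\bm{x}_s\rVert\le\delta$ and $\mathrm{H}_{\bm{\theta}}(\bm{x}_0)<\mathrm{H}_{\bm{\theta}}(\bm{x}_s)$. Since $\dot{\mathrm{H}}_{\bm{\theta}}=-\nabla\mathrm{H}_{\bm{\theta}}^\top\bm{R}_{\bm{\theta}}\nabla\mathrm{H}_{\bm{\theta}}\le0$, we have $\mathrm{H}_{\bm{\theta}}(\bm{x}(t;\bm{x}_0))\le \mathrm{H}_{\bm{\theta}}(\bm{x}_0)<\mathrm{H}_{\bm{\theta}}(\bm{x}_s)$ for all $t$. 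The solution exists globally because it stays in a compact sublevel set. If $\bm{x}(t;\bm{x}_0)\to\bm{x}_s$, continuity would force $\mathrm{H}_{\bm{\theta}}(\bm{x}(t))\to\mathrm{H}_{\bm{\theta}}(\bm{x}_s)$, a contradiction. Hence $\lim_{t\to\infty}\bm{x}(t;\bm{x}_0)\neq\bm{x}_s$.
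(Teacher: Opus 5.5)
Your proposal is correct and follows essentially the same route as the paper. A mountain pass argument for the coercive product Hamiltonian with two strict zero-level minima yields a critical point $\bm{x}_s$ that is not a local minimizer. The Hessian bound then follows by contrapositive, \eqref{eq:crit_eq} follows by dividing by $\mathrm{H}_{\bm{\theta}}(\bm{x}_s)>0$, and the failure of asymptotic stability follows from the monotone decrease of $\mathrm{H}_{\bm{\theta}}$ along trajectories. The only difference is in the mountain pass step: you assemble it from Ambrosetti--Rabinowitz, an explicit Palais--Smale check and the Hofer/Pucci--Serrin refinement, whereas the paper cites the finite-dimensional Courant-type version in Jabri (Theorem~5.2), which directly gives a third critical point that is not a relative minimizer.
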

\begin{proof}
	Since $m\geq 2$, the set of Hamiltonian minimizers $\bar{\mathbb{X}}$ contains at least two distinct minimizers.
	Fix two equilibria $\bar{\bm{x}}_i, \bar{\bm{x}}_j \in \bar{\mathbb{X}}$ with $\bar{\bm{x}}_i \neq \bar{\bm{x}}_j$.
	Then $\mathrm{H}_{\bm{\theta}}(\bar{\bm{x}}_i) = \mathrm{H}_{\bm{\theta}}(\bar{\bm{x}}_j) = 0$ while $\mathrm{H}_{\bm{\theta}}(\bm{x}) > 0$ for $\bm{x} \notin \bar{\mathbb{X}}$, so both equilibria are also strict local minima of the Hamiltonian.
	
	The Hamiltonian $\mathrm{H}_{\bm{\theta}} = \prod_{k=1}^m D_{\bm{\theta},k}$ is radially unbounded/coercive, since it is the product of the radially unbounded/coercive factors $D_{\bm{\theta},k}$; that is, $\mathrm{H}_{\bm{\theta}}(\bm{x}) \to \infty$ as $\lVert\bm{x}\rVert \to \infty$.
	
	Thus $\mathrm{H}_{\bm{\theta}} \in C^1(\mathbb{R}^{n_x}, \mathbb{R})$ is coercive with two distinct strict minima, and the mountain pass theorem~\cite[Theorem 5.2]{jabri2003mountain} yields the existence of a third critical point $\bm{x}_s$, which is not a local minimizer and contains in each neighborhood  a point $\bm{x}$ with $\mathrm{H}_{\bm{\theta}}(\bm{x}) < \mathrm{H}_{\bm{\theta}}(\bm{x}_s)$.
	Since $\bm{x}_s \notin \bar{\mathbb{X}}$ each factor satisfies $D_{\bm{\theta},k}(\bm{x}_s) > 0$, so dividing the stationarity condition $\nabla_{\bm{x}} \mathrm{H}_{\bm{\theta}}(\bm{x}_s) = \sum_{k} \bigl(\prod_{j \neq k} D_{\bm{\theta},j}\bigr)\nabla_{\bm{x}} D_{\bm{\theta},k} = \bm{0}$ from \cref{thm:grad_hess} by $\mathrm{H}_{\bm{\theta}}(\bm{x}_s) > 0$ yields the stationarity condition~\eqref{eq:crit_eq}. 
	
    For the Hessian eigenvalue, we argue by contradiction. 
    Suppose that the Hessian is positive definite, $\nabla_{\bm{x}}^2 \mathrm{H}_{\bm{\theta}}(\bm{x}_s)\succ 0$. 
	Then, by the second-order sufficient condition, $\bm{x}_s$ would be a strict local minimizer of $\mathrm{H}_{\bm{\theta}}$, contradicting the mountain-pass property. 
    Therefore, the Hessian cannot be positive definite, and hence $\lambda_{\min}\!\left(\nabla_{\bm{x}}^2 \mathrm{H}_{\bm{\theta}}(\bm{x}_s)\right)\leq 0$.

	In addition, for this critical point of the Hamiltonian $\nabla_{\bm{x}} \mathrm{H}_{\bm{\theta}}(\bm{x}_s) = \bm{0}$, the dynamics~\eqref{eq:phnn},\eqref{eq:hamiltonian} give $\dot{\bm{x}} = [\bm{J}_{\bm{\theta}}(\bm{x}_s) - \bm{R}_{\bm{\theta}}(\bm{x}_s)]\nabla_{\bm{x}} \mathrm{H}_{\bm{\theta}}(\bm{x}_s) = \bm{0}$, so $\bm{x}_s$ is an equilibrium of~\eqref{eq:phnn}.

	Finally, we prove that $\bm{x}_s$ is not asymptotically stable.
	Each $\delta$-neighborhood of $\bm{x}_s$ contains a point $\bm{x}_0$ with $\lVert  \bm{x}_0 -\bm{x}_s \rVert \leq \delta$ and $\mathrm{H}_{\bm{\theta}}(\bm{x}_0) < \mathrm{H}_{\bm{\theta}}(\bm{x}_s)$.
	Fix the corresponding energy level $\bar{\mathrm{H}} = \mathrm{H}_{\bm{\theta}}(\bm{x}_0)$ and consider the corresponding sublevelset of the Hamiltonian
	\begin{equation*}
		\Omega_{\bar{\mathrm{H}}}
		= \left\{ \bm{x} \in \mathbb{R}^{n_x} 
		\mid
		\mathrm{H}_{\bm{\theta}}(\bm{x}) \leq \bar{\mathrm{H}} 
		\right\}.
	\end{equation*}
	Because the change in energy along the flow of the ms-PHNN is non-positive $\dot{\mathrm{H}}_{\bm{\theta}}(\bm{x})= -\nabla_{\bm{x}} \mathrm{H}_{\bm{\theta}}(\bm{x})^\top \bm{R}_{\bm{\theta}}(\bm{x}) \nabla_{\bm{x}} \mathrm{H}_{\bm{\theta}}(\bm{x}) \leq 0$, the set $\Omega_{\bar{\mathrm{H}}}$ is positively invariant, therefore, $\bm{x}(t; \bm{x}_0) \in \Omega_{\bar{\mathrm{H}}}$ for all $t\geq 0$.
	However, by construction, $\bm{x}_s \notin \Omega_{\bar{\mathrm{H}}}$; in fact, since $\mathrm{H}_{\bm{\theta}}$ is continuous and $\bar{\mathrm{H}} < \mathrm{H}_{\bm{\theta}}(\bm{x}_s)$, the equilibrium $\bm{x}_s$ admits a neighborhood disjoint from $\Omega_{\bar{\mathrm{H}}}$. 
	Therefore, the solutions of the ms-PHNN starting at $\bm{x}_0 \in \Omega_{\bar{\mathrm{H}}}$ remain bounded away from $\bm{x}_s$, i.e., there exists $\rho > 0$ such that $\lVert \bm{x}(t; \bm{x}_0) - \bm{x}_s \rVert \geq \rho$ for all $t \geq 0$, and hence $\lim_{t\to\infty} \bm{x}(t; \bm{x}_0) \neq \bm{x}_s$.
	The existence of such an $\bm{x}_0 $ in each $\delta$-neighborhood of $\bm{x}_s$ contradicts the asymptotic stability of $\bm{x}_s$~\cite[Definition~4.1]{khalil2002nonlinear}.
\end{proof}

Next, we show that the region, where such points can occur is bounded using the preceding result (\cref{thm:bounded_equlibia})

\Cref{thm:existance_off_well_equlibia} guarantees that at least one other equilibrium exists.
However, in general the ms-PHNN can have multiple equilibria which do not correspond to minimum-energy equilibria, which we collect in the set
\begin{equation*}
	\mathbb{X}_s = \left\{\bm{x} \in \mathbb{R}^{n_x} :
	\dot{\bm{x}} = \left[\bm{J}_{\bm{\theta}}(\bm{x}) - \bm{R}_{\bm{\theta}}(\bm{x})\right]\nabla_{\bm{x}} \mathrm{H}_{\bm{\theta}}(\bm{x}) = \bm{0}\right\}
	\setminus \bar{\mathbb{X}},
\end{equation*}
which is nonempty, $\mathbb{X}_s \neq \emptyset$, by \cref{thm:existance_off_well_equlibia}.

Each of these points can be characterized by the linearization of the dynamics at the equilibrium~\cite{haddad2008nonlinear}.
Depending on the eigenvalues of the linearization, the equilibrium is an attractor if all eigenvalues have negative real part (corresponding to a minimum of the Hamiltonian), a repeller if all have positive real part (corresponding to a maximum of the Hamiltonian), and a saddle if the real parts have mixed signs.
In the marginal cases, purely imaginary eigenvalues yield a center, while a zero eigenvalue indicates a higher-order or non-isolated equilibrium.

\begin{theorem}[Boundedness of the other equilibria]\label{thm:bounded_off_well_equilibria}
	Consider the ms-PHNN~\eqref{eq:phnn}, where the Bregman divergences~\eqref{eq:hamiltonian} are generated from an FICNN~\eqref{eq:ficnn}.
	Let $\bm{J}_{\bm{\theta}}(\bm{x}) - \bm{R}_{\bm{\theta}}(\bm{x})$ be non-singular for all $\bm{x} \in \mathbb{R}^{n_x}$.
	Then the set of other equilibria $\mathbb{X}_s$ is bounded, i.e., there exists a compact set $\Omega \subset \mathbb{R}^{n_x}$ such that
	\begin{equation*}
		\mathbb{X}_s \subseteq \Omega.
	\end{equation*}
\end{theorem}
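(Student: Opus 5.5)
The plan is to reduce the statement about equilibria of the ms-PHNN to the statement about critical points of the Hamiltonian, which is already settled by \cref{thm:bounded_equlibia}. The only extra ingredient is the non-singularity of $\bm{J}_{\bm{\theta}}(\bm{x}) - \bm{R}_{\bm{\theta}}(\bm{x})$. With it, the equilibrium set of the vector field $\mathrm{f}_{\mathrm{PH}}$ coincides exactly with the critical set of $\mathrm{H}_{\bm{\theta}}$.

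First, take an arbitrary $\bm{x}_s \in \mathbb{X}_s$. By definition, $[\bm{J}_{\bm{\theta}}(\bm{x}_s) - \bm{R}_{\bm{\theta}}(\bm{x}_s)]\nabla_{\bm{x}} \mathrm{H}_{\bm{\theta}}(\bm{x}_s) = \bm{0}$. Since the matrix in brackets is non-singular at every point, its kernel is trivial, so $\nabla_{\bm{x}} \mathrm{H}_{\bm{\theta}}(\bm{x}_s) = \bm{0}$. Hence
\[
\mathbb{X}_s \subseteq \left\{ \bm{x} \in \mathbb{R}^{n_x} \mid \nabla_{\bm{x}} \mathrm{H}_{\bm{\theta}}(\bm{x}) = \bm{0} \right\} \setminus \bar{\mathbb{X}} .
\]
The converse inclusion also holds trivially, though it is not needed here.

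Second, apply \cref{thm:bounded_equlibia}. Its hypotheses are exactly the present ones: the factors are Bregman divergences generated from the FICNN~\eqref{eq:ficnn}. It yields a radius $r<\infty$ such that every critical point of $\mathrm{H}_{\bm{\theta}}$ lies in $\Omega_r$. Setting $\Omega := \Omega_r$, which is a closed ball and hence compact, gives $\mathbb{X}_s \subseteq \Omega$.

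I would also remark that the same $\Omega_r$ contains $\bar{\mathbb{X}}$, since the $\bar{\bm{x}}_k$ are themselves critical points. The whole equilibrium set of the ms-PHNN is therefore bounded.

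There is essentially no hard step, since the growth analysis was already done in \cref{thm:bounded_equlibia}. The one point that needs care is that this earlier result relies on \cref{thm:bregmann_radial_unbounded}, which uses convexity together with \cref{ass:bregmann_second_order_sufficient}, and on the bounded gradients from \cref{thm:bregman-grad-bounded}. I would therefore state explicitly that \cref{ass:bregmann_second_order_sufficient}, which is in force throughout the section, is being used. Non-singularity is essential to the reduction: without it, zeros of $\mathrm{f}_{\mathrm{PH}}$ could arise where $\nabla_{\bm{x}}\mathrm{H}_{\bm{\theta}}$ lies in the kernel of $\bm{J}_{\bm{\theta}}-\bm{R}_{\bm{\theta}}$, and such zeros would be controlled by nothing proven so far.
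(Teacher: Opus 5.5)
Your proposal is correct and matches the paper's proof: non-singularity of $\bm{J}_{\bm{\theta}}(\bm{x}) - \bm{R}_{\bm{\theta}}(\bm{x})$ makes the equilibria of the ms-PHNN exactly the critical points of $\mathrm{H}_{\bm{\theta}}$, and \cref{thm:bounded_equlibia} then places them all in the compact ball $\Omega_r$. You also point out that \cref{ass:bregmann_second_order_sufficient} is used implicitly, via the radial unboundedness that \cref{thm:bounded_equlibia} relies on; this is a worthwhile clarification, since the statement itself leaves it unstated.
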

\begin{proof}
	Since $\bm{J}_{\bm{\theta}}(\bm{x}) - \bm{R}_{\bm{\theta}}(\bm{x})$ is non-singular for all $\bm{x} \in \mathbb{R}^{n_x}$, it is injective, such that $\left[\bm{J}_{\bm{\theta}}(\bm{x}) - \bm{R}_{\bm{\theta}}(\bm{x})\right] \nabla_{\bm{x}} \mathrm{H}_{\bm{\theta}}(\bm{x}) = \bm{0}$ holds if and only if $\nabla_{\bm{x}} \mathrm{H}_{\bm{\theta}}(\bm{x}) = \bm{0}$.
	Hence, the set of equilibria of the ms-PHNN~\eqref{eq:phnn} coincides with the set of critical points of the Hamiltonian~\eqref{eq:hamiltonian}, $\left\{ \bm{x} \in \mathbb{R}^{n_x} \mid \nabla_{\bm{x}} \mathrm{H}_{\bm{\theta}}(\bm{x}) = \bm{0} \right\}$.
	By \cref{thm:bounded_equlibia}, this set is contained in the compact region $\Omega_r$.
	Since the other equilibria $\mathbb{X}_s$ are a subset of the equilibria of the ms-PHNN, $\mathbb{X}_s \subseteq \Omega_r$, and hence $\mathbb{X}_s$ is bounded.
\end{proof}

\subsection{Almost global asymptotic stability}
Next, we characterize the region of attraction of the other equilibria under \cref{ass:phnn_equilibria}, which implies that the linearization of the ms-PHNN at every other equilibrium is unstable in at least one direction.
Therefore, each of these critical points corresponds to either a saddle or a maximum of the Hamiltonian.

\begin{theorem}[Region of attraction of the other equilibria]
	\label{thm:region_attraction_off_well}
	Consider a ms-PHNN~\eqref{eq:phnn},\eqref{eq:hamiltonian} and let \cref{ass:phnn_equilibria} hold.
	Then, the region of attraction of the other equilibria
	\begin{equation*}
		\mathcal{W}^s(\mathbb{X}_s) := \left\{ \bm{x}_0 \in \mathbb{R}^{n_x} \;\middle|\; \lim_{t\to\infty} \bm{x}(t; \bm{x}_0) = \bm{x}_s \in \mathbb{X}_s \right\},
	\end{equation*}
	has Lebesgue measure zero.
\end{theorem}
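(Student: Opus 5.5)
The plan is to write $\mathcal{W}^s(\mathbb{X}_s)$ as a finite union $\bigcup_{\bm{x}_s \in \mathbb{X}_s} \mathcal{W}^s(\bm{x}_s)$ of the global stable sets of the individual other equilibria. Finiteness is guaranteed by \cref{ass:phnn_equilibria}. It then suffices to show that each $\mathcal{W}^s(\bm{x}_s)$ is a Lebesgue null set, since a finite union of null sets is null.

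Fix $\bm{x}_s \in \mathbb{X}_s$. The vector field $\mathrm{f}_{\mathrm{PH}} = [\bm{J}_{\bm{\theta}} - \bm{R}_{\bm{\theta}}]\nabla_{\bm{x}}\mathrm{H}_{\bm{\theta}}$ is $C^1$, because $\bm{J}_{\bm{\theta}}, \bm{R}_{\bm{\theta}} \in C^1$ and $\mathrm{H}_{\bm{\theta}} \in C^\infty$. Hence the flow $\phi_t(\bm{x}_0) = \bm{x}(t;\bm{x}_0)$ is a $C^1$ local diffeomorphism wherever it is defined.

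By \cref{ass:phnn_equilibria}, $\bm{x}_s$ is hyperbolic with $n_u \geq 1$ eigenvalues of positive real part. The stable manifold theorem then gives a neighborhood $U$ of $\bm{x}_s$ in which the local stable manifold $\mathcal{W}^s_{\mathrm{loc}}(\bm{x}_s)$ is an embedded $C^1$ submanifold of dimension $n_x - n_u \leq n_x - 1$. Moreover, every trajectory that remains in $U$ for all forward time lies on $\mathcal{W}^s_{\mathrm{loc}}$. A $C^1$ submanifold of positive codimension has Lebesgue measure zero.

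Next I globalize. If $\bm{x}(t;\bm{x}_0) \to \bm{x}_s$, then there is an integer $N$ with $\bm{x}(t;\bm{x}_0) \in U$ for all $t \geq N$, so $\phi_N(\bm{x}_0) \in \mathcal{W}^s_{\mathrm{loc}}$. This gives
\[
\mathcal{W}^s(\bm{x}_s) \subseteq \bigcup_{N \in \mathbb{N}} \phi_N^{-1}\bigl(\mathcal{W}^s_{\mathrm{loc}}(\bm{x}_s)\bigr) = \bigcup_{N \in \mathbb{N}} \phi_{-N}\bigl(\mathcal{W}^s_{\mathrm{loc}}(\bm{x}_s)\bigr),
\]
which is a countable union.

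Each $\phi_{-N}$ is a $C^1$ map, and such maps send null sets to null sets, since they are locally Lipschitz. Hence each term is null, and the countable union is null.

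The main obstacle is making the backward flow $\phi_{-N}$ well defined on the relevant points, since the ODE is only guaranteed to have forward solutions. I would handle this in two steps.

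First, forward completeness. Since $\dot{\mathrm{H}}_{\bm{\theta}} \leq 0$ and $\mathrm{H}_{\bm{\theta}}$ is radially unbounded (\cref{thm:bregmann_radial_unbounded}), sublevel sets are compact and positively invariant, so forward solutions exist for all $t \geq 0$.

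Second, I avoid backward time altogether by working with preimages under forward maps. I restrict to the open set $\mathcal{D}_N$ of points whose forward solution exists on $[0,N]$; by the first step this is all of $\mathbb{R}^{n_x}$. On $\mathcal{D}_N$, $\phi_N$ is an injective $C^1$ map with invertible Jacobian, by Liouville's formula $\det D\phi_N = \exp\int_0^N \operatorname{tr} \nabla \mathrm{f}_{\mathrm{PH}} \neq 0$. By the inverse function theorem its inverse, defined on the image, is $C^1$. So $\phi_N^{-1}(\mathcal{W}^s_{\mathrm{loc}})$ is the image of a null set under a $C^1$ map on an open set, and is therefore null.

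Combining the finitely many equilibria yields $\mu(\mathcal{W}^s(\mathbb{X}_s)) = 0$.
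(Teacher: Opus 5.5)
Your proof is correct, but it reaches the conclusion by a different route than the paper. The paper cites the global stable manifold theorem (Sideris, Thm.~7.7) to assert that each $\mathcal{W}^s(\bm{x}_s)$ is itself a $C^1$ manifold of dimension $k \le n_x - 1$, hence null, and then takes the finite union over $\mathbb{X}_s$. You use only the \emph{local} stable manifold theorem and globalize by hand. You write $\mathcal{W}^s(\bm{x}_s) \subseteq \bigcup_{N} \phi_N^{-1}\bigl(\mathcal{W}^s_{\mathrm{loc}}(\bm{x}_s)\bigr)$. You then show $\phi_N$ is a $C^1$ diffeomorphism onto an open image, using uniqueness, Liouville's formula and the inverse function theorem, so each preimage is the $C^1$ image of a null set.

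What your route buys is self-containedness and explicit avoidance of the backward flow. That matters here, because the ms-PHNN is only guaranteed to be forward complete. Global stable manifold theorems are typically phrased via $\bigcup_{t\le 0}\phi_t(\mathcal{W}^s_{\mathrm{loc}})$, and in general they yield only an injectively immersed manifold; the paper's one-line citation passes over both points. The paper's route is shorter and also gives manifold structure, which the measure-zero conclusion does not need.

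One minor remark. Your forward-completeness step relies on radial unboundedness (\cref{thm:bregmann_radial_unbounded}), hence on the standing \cref{ass:bregmann_second_order_sufficient}, so it is legitimate. It is also unnecessary: every $\bm{x}_0 \in \mathcal{W}^s(\bm{x}_s)$ already has a solution on $[0,\infty)$ by definition, so it lies in the open set $\mathcal{D}_N$ for every $N$.
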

\begin{proof}
	By \cref{ass:phnn_equilibria}, each $\bm{x}_s \in \mathbb{X}_s$ is a hyperbolic equilibrium with at least one eigenvalue having a positive real part.
	Consequently, the stable subspace of the linearization $\bm{A}_s$ has dimension $k \leq n_x -1$.

	Therefore, by the global stable manifold theorem~\citep[Theorem~7.7]{sideris_ordinary_2013}, for each $\bm{x}_s  \in \mathbb{X}_s$ the global stable manifold
	\begin{equation*}
		\mathcal{W}^s(\bm{x}_s) := \left\{ \bm{x}_0 \in \mathbb{R}^{n_x} : \lim_{t \to \infty} \bm{x}(t; \bm{x}_0) = \bm{x}_s \right\},
	\end{equation*}
	is a $C^1$ manifold with dimension equal to that of the stable manifold of the linearization, and therefore $\dim \left(\mathcal{W}^s(\bm{x}_s)\right) = k \leq n_x -1$.
	Therefore, each stable manifold $\mathcal{W}^s(\bm{x}_s)$ for each $\bm{x}_s \in \mathbb{X}_s$ has Lebesgue measure zero, $\mu \left(\mathcal{W}^s(\bm{x}_s)\right) = 0$.
	Finally, since $\mathbb{X}_s$ is finite by \cref{ass:phnn_equilibria},the combined region of attraction for all other equilibria
	\begin{equation*}
		\mathcal{W}^s(\mathbb{X}_s) = \bigcup_{\bm{x}_s \in \mathbb{X}_s} \mathcal{W}^s(\bm{x}_s),
	\end{equation*}
	is a finite union of nullsets and hence has Lebesgue measure zero, $\mu\left(\mathcal{W}^s(\mathbb{X}_s)\right) = 0$.
\end{proof}

With the region of attraction of the other equilibria as a Lebesgue-measure-zero set, we next show that the energy dissipation of the ms-PHNN forces the trajectories starting at almost every initial condition to converge to one of the minimum-energy equilibria.

\ThmAlmostGlobalStability*
\begin{proof}
	We prove that all trajectories converge to $\bar{\mathbb{X}} \cup \mathbb{X}_s$ using LaSalle's invariance principle~\cite[Theorem~4.4]{khalil2002nonlinear} if $\bm{R}_{\bm{\theta}}(\bm{x}) \succ 0$ for all $\bm{x} \in \mathbb{R}^{n_x}$.

	First, we consider the sublevel set of the Hamiltonian
	\begin{equation*}
		\Omega_c = \left\{\bm{x} \in \mathbb{R}^{n_x}  \middle|
		 \mathrm{H}_{\bm{\theta}}(\bm{x}) \leq c \right\}
	\end{equation*}
	By \cref{ass:bregmann_second_order_sufficient} and \cref{thm:bregmann_radial_unbounded}, the Hamiltonian~\eqref{eq:hamiltonian} is radially unbounded, so $\Omega_c$ is a compact set.

	Along any trajectory the dissipation is
	\begin{equation*}
		\dot{\mathrm{H}}_{\bm{\theta}}(\bm{x})
		= \nabla_{\bm{x}} \mathrm{H}_{\bm{\theta}}(\bm{x})^\top \left[\bm{J}_{\bm{\theta}}(\bm{x}) - \bm{R}_{\bm{\theta}}(\bm{x})\right]	\nabla_{\bm{x}} \mathrm{H}_{\bm{\theta}}(\bm{x})
		= - \nabla_{\bm{x}} \mathrm{H}_{\bm{\theta}}(\bm{x})^\top \bm{R}_{\bm{\theta}}(\bm{x}) \nabla_{\bm{x}} \mathrm{H}_{\bm{\theta}}(\bm{x})
		\leq  0.
	\end{equation*}
	so $\mathrm{H}_{\bm{\theta}}(\bm{x})$ is non-increasing and each $\Omega_c$ is positively invariant.
	Consider the set $\mathbb{E}_c = \{\bm{x} \in \Omega_c \mid \dot{\mathrm{H}}_{\bm{\theta}}(\bm{x}) = 0\}$ on which the dissipation vanishes.
	Since $\bm{R}_{\bm{\theta}}(\bm{x}) \succ 0$, the dissipation only vanishes for states where the gradient of the Hamiltonian vanishes, i.e., for $\bm{x} \in \bar{\mathbb{X}} \cup \mathbb{X}_s$.
	The largest invariant set contained in $\mathbb{E}_c$ is hence
	\begin{equation*}
		\mathbb{M}_c = \left\{\bm{x} \in \Omega_c \;\middle|\;
		\nabla_{\bm{x}} \mathrm{H}_{\bm{\theta}}(\bm{x}) = \bm{0}\right\}
		= \left(\mathbb{X}_s \cup \bar{\mathbb{X}}\right) \cap \Omega_c.
	\end{equation*}

	Using LaSalle's invariance principle~\cite[Theorem~4.4]{khalil2002nonlinear}, every trajectory starting in $\Omega_c$ converges to $\mathbb{M}_c$.
	Finally, let $\bm{x}_0 \in \mathbb{R}^{n_x}$ be an arbitrary initial condition, and let $c := \mathrm{H}_{\bm{\theta}}(\bm{x}_0)$ be the corresponding energy level.
	Then $\bm{x}_0 \in \Omega_c$, and since this holds for all $\bm{x}_0 \in \mathbb{R}^{n_x}$, LaSalle's invariance principle yields that every trajectory converges to $\mathbb{X}_s \cup \bar{\mathbb{X}}$, i.e., 
	$\lim_{t \to \infty} \operatorname{dist}\left(\bm{x}(t;\bm{x}_0), \mathbb{X}_s \cup \bar{\mathbb{X}}\right) = 0$.
	Since $\bar{\mathbb{X}}$ is finite by construction and $\mathbb{X}_s$ is finite by \cref{ass:phnn_equilibria}, the set $\bar{\mathbb{X}}\cup\mathbb{X}_s$ is finite and therefore consists of isolated equilibria, and continuity of the trajectory therefore implies that the trajectory converges to a single equilibrium $\bar{\bm{x}} \in \bar{\mathbb{X}} \cup \mathbb{X}_s$, i.e., $\lim_{t \to \infty} \lVert \bm{x}(t;\bm{x}_0) - \bar{\bm{x}} \rVert = 0$.
	In addition, by \cref{thm:region_attraction_off_well}, the region of attraction $\mathcal{W}^s(\mathbb{X}_s)$ of the other equilibria has Lebesgue measure zero, hence, for almost every initial condition $\bm{x}_0 \in \mathbb{R}^{n_x} \setminus \mathcal{W}^s(\mathbb{X}_s)$ the resulting trajectory converges to $\bar{\mathbb{X}}$.
	Finally, since $\bar{\mathbb{X}}$ is finite by construction and $\mathbb{X}_s$ is finite by \cref{ass:phnn_equilibria}, all equilibria are isolated. 
	Hence, for all $\bm{x}_0 \in \mathbb{R}^{n_x} \setminus \mathcal{W}^s(\mathbb{X}_s)$, the trajectory converges to one minimum-energy equilibrium $\bar{\bm{x}}_i \in \bar{\mathbb{X}}$.
\end{proof}

\subsection{Separable Canonical ms-PHNN}
\label{sec:separable_phnn}

Many pH systems, in particular mechanical ones, have a state that splits into generalized coordinates $\bm{q} \in \mathbb{R}^{n_p}$ and conjugate momenta $\bm{p} \in \mathbb{R}^{n_p}$, so that $\bm{x} = [\bm{q}^\top, \bm{p}^\top]^\top \in \mathbb{R}^{n_x}$ with $n_x = 2n_p$.
Accordingly, the Hamiltonian decomposes into a kinetic energy $T_{\bm{\theta}} \in C^{\infty}(\mathbb{R}^{n_p}, \mathbb{R})_{\geq 0}$ and a potential energy $V_{\bm{\theta}} \in C^{\infty}(\mathbb{R}^{n_p}, \mathbb{R}_{\geq 0})$,
\begin{equation*}
	\mathrm{H}_{\bm{\theta}}(\bm{x})
	=
	T_{\bm{\theta}}(\bm{p})
	+
	V_{\bm{\theta}}(\bm{q}),
	\qquad
	V_{\bm{\theta}}(\bm{q}) = \prod_{k=1}^{m} D_{\bm{\theta},k}(\bm{q}).
\end{equation*}
In this, the multiple stable positions are encoded in the potential energy $V_{\bm{\theta}}$ via the product, placing a potential energy minimum at each $\bar{\bm{q}}_k$.
The kinetic energy is an s-PHNN built from a second FICNN $g_{\bm{\theta}} \in C^{\infty}(\mathbb{R}^{n_p}, \mathbb{R})$ as in~\cite{roth2025stable}, with zero momentum as its unique minimizer $\bar{\bm{p}}^\star = \bm{0}$
\begin{equation*}
	T_{\bm{\theta}}(\bm{p})
	=
	g_{\bm{\theta}}(\bm{p})
	- g_{\bm{\theta}}(\bm{0})
	- \nabla_{\bm{p}} g_{\bm{\theta}}(\bm{0})^{\!\top} \bm{p}.
\end{equation*}
Since both summands are nonnegative and vanish only at their respective minimizers, $\mathrm{H}_{\bm{\theta}}$ attains its minima exactly on the set of minimum-energy equilibria
\begin{equation*}
    \bar{\mathbb{X}}
    =
    \left\{
        \begin{bmatrix}
            \bar{\bm{q}}_1 \\
            \bm{0}
        \end{bmatrix},
        \dots,
        \begin{bmatrix}
            \bar{\bm{q}}_m \\
            \bm{0}
        \end{bmatrix}
    \right\}.
\end{equation*}

We further impose a canonical symplectic structure and restrict dissipation to the momenta
\begin{equation*}
	\bm{J} =
	\begin{bmatrix} \bm{0} & \bm{I} \\ -\bm{I} & \bm{0} \end{bmatrix},
	\qquad
	\bm{R}_{\bm{\theta}}(\bm{x}) =
	\begin{bmatrix} \bm{0} & \bm{0} \\ \bm{0} & \bm{R}_{\bm{p},\bm{\theta}}(\bm{x}) \end{bmatrix},
	\qquad
	\bm{R}_{\bm{p},\bm{\theta}}(\bm{x}) = \bm{R}_{\bm{p},\bm{\theta}}(\bm{x})^\top \succ \bm{0}.
\end{equation*}

With this, the full dynamics of the ms-PHNN~\eqref{eq:phnn} are given by
\begin{subequations}
\label{eq:separable_hamiltonian_full}
\begin{equation}
	\dot{\bm{x}} = \begin{bmatrix} \dot{\bm{q}} \\ \dot{\bm{p}} \end{bmatrix}
	=
	\begin{bmatrix}
		\nabla_{\bm{p}} T_{\bm{\theta}}(\bm{p}) \\
		-\nabla_{\bm{q}} V_{\bm{\theta}}(\bm{q}) - \bm{R}_{\bm{p},\bm{\theta}}(\bm{x})\,\nabla_{\bm{p}} T_{\bm{\theta}}(\bm{p})
	\end{bmatrix},
	\qquad
	\bm{x}(0) = \bm{x}_0,
\end{equation}
where
\begin{equation}
	V_{\bm{\theta}}(\bm{q}) = \prod_{k=1}^{m} D_{\bm{\theta},k}(\bm{q}),
	\qquad
	D_{\bm{\theta},k}(\bm{q}) = f_{\bm{\theta},k}(\bm{q}) - f_{\bm{\theta},k}(\bar{\bm{q}}_k) - \nabla_{\bm{q}} f_{\bm{\theta},k}(\bar{\bm{q}}_k)^{\!\top}(\bm{q} - \bar{\bm{q}}_k),
\end{equation}
\begin{equation}
	T_{\bm{\theta}}(\bm{p}) = g_{\bm{\theta}}(\bm{p}) - g_{\bm{\theta}}(\bm{0}) - \nabla_{\bm{p}} g_{\bm{\theta}}(\bm{0})^{\!\top} \bm{p}.
\end{equation}
\end{subequations}

Next, we show that for the separable canonical ms-PHNN the dissipation requirement for asymptotic stability can be relaxed to dissipation on the momentum.

\begin{theorem}[Almost-global stability of minimum-energy equilibria in separable canonical ms-PHNNs]
	\label{thm:almost_global_stability_seperable}
	Consider the separable canonical ms-PHNN~\eqref{eq:separable_hamiltonian_full}
	\begin{equation*}
		\dot{\bm{x}}
		=
		\left[\bm{J}_{\bm{\theta}}(\bm{x}) - \bm{R}_{\bm{\theta}}(\bm{x})\right]
		\nabla_{\bm{x}} \mathrm{H}_{\bm{\theta}}(\bm{x}),
		\qquad
		\bm{J}_{\bm{\theta}}(\bm{x}) = -\bm{J}_{\bm{\theta}}(\bm{x})^\top,
		\quad
		\bm{R}_{\bm{\theta}}(\bm{x}) = \bm{R}_{\bm{\theta}}(\bm{x})^\top \succeq \bm{0}.
	\end{equation*}
	Let \cref{ass:phnn_equilibria} hold and let \cref{ass:bregmann_second_order_sufficient} hold for each Bregman divergence in the product potential energy and the kinetic energy.
	And let the dissipation satisfy $\bm{R}_{\bm{p},\bm{\theta}}(\bm{x}) \succ 0$ for all $\bm{x} \in \mathbb{R}^{n_x}$, then for almost every initial condition $\bm{x}_0 \in \mathbb{R}^{n_x} \setminus \mathcal{W}^s(\mathbb{X}_s)$, the solution converges to one minimum-energy equilibrium
	\begin{equation*}
		\lim_{t\to\infty} \bm{x}(t; \bm{x}_0) = \bar{\bm{x}}_i \in \bar{\mathbb{X}}.
	\end{equation*}
	and the region of attraction $\mathcal{W}^s(\mathbb{X}_s)$ of the other equilibria has Lebesgue measure zero.
\end{theorem}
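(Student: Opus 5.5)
We follow the proof of \cref{thm:almost_global_stability}, using LaSalle's invariance principle with the Hamiltonian as Lyapunov-type function. The only change is that the dissipation is now merely semidefinite, so the set where $\dot{\mathrm{H}}_{\bm{\theta}}$ vanishes is larger than the critical set. The extra work is to show that the largest invariant set inside it still consists only of equilibria.

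\emph{Step 1: compact, positively invariant sublevel sets.} Under \cref{ass:bregmann_second_order_sufficient}, \cref{thm:bregmann_radial_unbounded} applies to each $D_{\bm{\theta},k}(\bm{q})$ and to $T_{\bm{\theta}}(\bm{p})$. Hence $V_{\bm{\theta}}$ and $T_{\bm{\theta}}$ are both nonnegative and coercive in their own arguments. Their sum $\mathrm{H}_{\bm{\theta}}(\bm{x})=T_{\bm{\theta}}(\bm{p})+V_{\bm{\theta}}(\bm{q})$ is therefore coercive on $\mathbb{R}^{n_x}$: if $\lVert\bm{x}\rVert\to\infty$, then $\lVert\bm{q}\rVert$ or $\lVert\bm{p}\rVert$ is unbounded, and each summand is bounded below by $0$. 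So every $\Omega_c=\{\mathrm{H}_{\bm{\theta}}\le c\}$ is compact. Along \eqref{eq:separable_hamiltonian_full},
\[
\dot{\mathrm{H}}_{\bm{\theta}} = -\nabla_{\bm{p}} T_{\bm{\theta}}(\bm{p})^\top \bm{R}_{\bm{p},\bm{\theta}}(\bm{x})\nabla_{\bm{p}} T_{\bm{\theta}}(\bm{p})\le 0,
\]
because the $\nabla_{\bm{q}}V_{\bm{\theta}}^\top\nabla_{\bm{p}}T_{\bm{\theta}}$ cross terms cancel by skew-symmetry of $\bm{J}$. Thus each $\Omega_c$ is positively invariant.

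\emph{Step 2: largest invariant set (the main obstacle).} Since $\bm{R}_{\bm{p},\bm{\theta}}\succ 0$, $\dot{\mathrm{H}}_{\bm{\theta}}=0$ exactly when $\nabla_{\bm{p}}T_{\bm{\theta}}(\bm{p})=\bm{0}$. As $T_{\bm{\theta}}$ is a convex Bregman divergence with unique minimizer $\bm{0}$ (by the argument in \cref{thm:grad_hess}), this is equivalent to $\bm{p}=\bm{0}$. So $\mathbb{E}_c=\{\bm{x}\in\Omega_c\mid \bm{p}=\bm{0}\}$.

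Take a solution that stays in $\mathbb{E}_c$ for all time. Then $\bm{p}(t)\equiv\bm{0}$, hence $\dot{\bm{p}}\equiv\bm{0}$. The $\bm{p}$-equation gives $\bm{0}=-\nabla_{\bm{q}}V_{\bm{\theta}}(\bm{q}(t))-\bm{R}_{\bm{p},\bm{\theta}}\nabla_{\bm{p}}T_{\bm{\theta}}(\bm{0})=-\nabla_{\bm{q}}V_{\bm{\theta}}(\bm{q}(t))$. Also $\dot{\bm{q}}=\nabla_{\bm{p}}T_{\bm{\theta}}(\bm{0})=\bm{0}$. Therefore the solution is a constant $[\bm{q}^\top,\bm{0}^\top]^\top$ with $\nabla_{\bm{q}}V_{\bm{\theta}}(\bm{q})=\bm{0}$, which is a critical point of $\mathrm{H}_{\bm{\theta}}$.

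Conversely, for the canonical $\bm{J}-\bm{R}$, any such point is an equilibrium, and every equilibrium must have $\nabla_{\bm{p}}T_{\bm{\theta}}=\bm{0}$ (from the $\dot{\bm{q}}$ equation) and then $\nabla_{\bm{q}}V_{\bm{\theta}}=\bm{0}$. Hence the largest invariant set is $\mathbb{M}_c=(\bar{\mathbb{X}}\cup\mathbb{X}_s)\cap\Omega_c$. This is the usual Barbashin--Krasovskii-style step replacing strict dissipation.

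\emph{Step 3: conclusion.} By LaSalle~\cite[Theorem~4.4]{khalil2002nonlinear} with $c=\mathrm{H}_{\bm{\theta}}(\bm{x}_0)$, every trajectory approaches $\bar{\mathbb{X}}\cup\mathbb{X}_s$. This set is finite by construction and by \cref{ass:phnn_equilibria}. The $\omega$-limit set is connected, so it must be a single point, and the trajectory converges to one equilibrium.

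\cref{thm:region_attraction_off_well} uses only \cref{ass:phnn_equilibria} and the stable manifold theorem, so it applies unchanged: $\mathcal{W}^s(\mathbb{X}_s)$ has Lebesgue measure zero. Every $\bm{x}_0\notin\mathcal{W}^s(\mathbb{X}_s)$ therefore converges to some $\bar{\bm{x}}_i\in\bar{\mathbb{X}}$, which proves the claim.
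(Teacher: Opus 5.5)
Your proposal is correct and follows the paper's proof essentially step for step: compact, positively invariant sublevel sets; $\mathbb{E}_c=\{\bm{p}=\bm{0}\}\cap\Omega_c$ via convexity of $T_{\bm{\theta}}$; $\nabla_{\bm{q}}V_{\bm{\theta}}=\bm{0}$ on the largest invariant subset; then LaSalle, finiteness of the equilibrium set, and \cref{thm:region_attraction_off_well}. Your explicit checks that $\dot{\bm{q}}=\bm{0}$ on that subset, that equilibria of the canonical system coincide with critical points of $\mathrm{H}_{\bm{\theta}}$, and that the connected $\omega$-limit set must be a single point only make rigorous steps the paper leaves implicit.
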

\begin{proof}
	We prove that all trajectories converge to $\bar{\mathbb{X}} \cup \mathbb{X}_s$ using LaSalle's invariance principle~\cite[Theorem~4.4]{khalil2002nonlinear} if $\bm{R}_{\bm{p},\bm{\theta}}(\bm{x}) \succ 0$ for all $\bm{x} \in \mathbb{R}^{n_x}$.
	
	First, we consider the sublevel set of the Hamiltonian
	\begin{equation*}
		\Omega_c = \left\{\bm{x} \in \mathbb{R}^{n_x}  \middle|
		 \mathrm{H}_{\bm{\theta}}(\bm{x}) \leq c \right\}
	\end{equation*}
	By \cref{ass:bregmann_second_order_sufficient} and \cref{thm:bregmann_radial_unbounded}, both the potential energy $V_{\bm{\theta}}(\bm{q})$ and the kinetic energy $T_{\bm{\theta}}(\bm{p})$ are radially unbounded.
	Therefore, as $\bm{x} = [\bm{q}^\top, \bm{p}^\top]^\top \to \infty$, either $\bm{p} \to \infty$ or $\bm{q} \to \infty$ or both, so the separable Hamiltonian $\mathrm{H}_{\bm{\theta}}(\bm{x})$ is also radially unbounded, and hence $\Omega_c$ is a compact set.
	Along any trajectory the dissipation is
	\begin{align*}
		\dot{\mathrm{H}}_{\bm{\theta}}(\bm{x})
		&= \nabla_{\bm{x}} \mathrm{H}_{\bm{\theta}}(\bm{x})^\top \left[\bm{J}_{\bm{\theta}}(\bm{x}) - \bm{R}_{\bm{\theta}}(\bm{x})\right]	\nabla_{\bm{x}} \mathrm{H}_{\bm{\theta}}(\bm{x})\\
		&= - \nabla_{\bm{x}} \mathrm{H}_{\bm{\theta}}(\bm{x})^\top \bm{R}_{\bm{\theta}}(\bm{x}) \nabla_{\bm{x}} \mathrm{H}_{\bm{\theta}}(\bm{x})\\
		&= -\nabla_{\bm{p}} T_{\bm{\theta}}(\bm{p})^\top \bm{R}_{\bm{p},\bm{\theta}}(\bm{x})\, \nabla_{\bm{p}} T_{\bm{\theta}}(\bm{p})
		\leq  0
	\end{align*}
	so $\mathrm{H}_{\bm{\theta}}(\bm{x})$ is non-increasing and each $\Omega_c$ is positively invariant.
	Consider the set $\mathbb{E}_c = \{\bm{x} \in \Omega_c \mid \dot{\mathrm{H}}_{\bm{\theta}}(\bm{x}) = 0\}$ on which the dissipation vanishes.
	Since $\bm{R}_{\bm{p},\bm{\theta}}(\bm{x}) \succ 0$ for all $\bm{x} \in \mathbb{R}^{n_x}$, the dissipation vanishes if and only if the gradient of the kinetic energy vanishes, i.e., 
	\begin{equation*}
			\mathbb{E}_c 
			= \left\{\bm{x} \in \Omega_c \;\middle|\; \dot{\mathrm{H}}_{\bm{\theta}}(\bm{x}) = 0 \right\}
			= \left\{\bm{x} = \left[\bm{q}^\top, \bm{p}^\top\right]^\top \in \Omega_c \;\middle|\; \nabla_{\bm{p}} T_{\bm{\theta}}(\bm{p}) = \bm{0} \right\}.
	\end{equation*}
	By Assumption~\ref{ass:bregmann_second_order_sufficient} and the convexity of $T_{\bm{\theta}}$, the zero momentum $\bm{p}=\bm{0}$ is the unique global minimizer of $T_{\bm{\theta}}$, and hence, $\nabla_{\bm{p}} T_{\bm{\theta}}(\bm{p}) = \bm{0}$ if and only if $\bm{p} = \bm{0}$.
	Next, we consider the largest invariant set $\mathbb{M}_c$ in $\mathbb{E}_c$.
	For a trajectory to remain in $\mathbb{E}_c$, the momenta must remain constant zero $\bm{p} = \bm{0}$, and therefore $\dot{\bm{p}} = \bm{0}$.
	From the dynamics,
	\begin{equation*}
		\dot{\bm{p}} = -\nabla_{\bm{q}} V_{\bm{\theta}}(\bm{q}) - \bm{R}_{\bm{p},\bm{\theta}}(\bm{x})\nabla_{\bm{p}} T_{\bm{\theta}}(\bm{p}) = \bm{0},
	\end{equation*}
	and since $\nabla_{\bm{p}} T_{\bm{\theta}}(\bm{p}) = \bm{0}$ on $\mathbb{E}_c$, this reduces to $\nabla_{\bm{q}} V_{\bm{\theta}}(\bm{q}) = \bm{0}$.
	The largest invariant set contained in $\mathbb{E}_c$ is hence
	\begin{align*}
		\mathbb{M}_c 
		&= \left\{\bm{x} \in \Omega_c \;\middle|\;
		\nabla_{\bm{x}} \mathrm{H}_{\bm{\theta}}(\bm{x}) = \bm{0}\right\} \\
		&= \left(\mathbb{X}_s \cup \bar{\mathbb{X}}\right) \cap \Omega_c.
	\end{align*}
	Hence, despite the dissipation acting on the momentum only, this largest invariant set $\mathbb{M}_c$ equals the one obtained in the non-separable case of \cref{thm:almost_global_stability}, and hence the remainder of the proof follows that of \cref{thm:almost_global_stability}.
	In addition, by \cref{thm:region_attraction_off_well}, the region of attraction $\mathcal{W}^s(\mathbb{X}_s)$ of the other equilibria has Lebesgue measure zero, hence, for almost every initial condition $\bm{x}_0 \in \mathbb{R}^{n_x} \setminus \mathcal{W}^s(\mathbb{X}_s)$, the trajectory converges to one minimum-energy equilibrium $\bar{\bm{x}}_i \in \bar{\mathbb{X}}$.
\end{proof}

\section{Experimental Details}\label{ap:experimental}
\subsection{Systems}

\paragraph{Duffing Oscillator.}
Defined in terms of Hamiltonian $\rm{H}(q,p) = \tfrac12 p^2 + \tfrac14(q^2-1)^2$ with dampening $\bm{R}=\operatorname{diag}(0,0.1)$ and canonical $\bm{J}=[[0,-1],[-1,0]]$.
This system has two minimum-energy equilibria at $(q,p)=[\pm 1, 0]^T$ and a saddle at $(q,p)=[0,0]^T$.

\paragraph{Asymmetric Duffing.}
A potential of $V(q) = k(q-q_\ell)^2(q-q_r)^2((q-s)^2 + w)$ with $k = 1/80$, $q_\ell = -1.5$, $q_r = 2.0$, $s = 1.0$, $w = 0.5$ and $T(p)=\tfrac12 p^2$ forming the Hamiltonian $\rm{H}(q,p)=V(q)+T(p)$. The pH system comes with dampening $\bm{R}=\operatorname{diag}(0,0.1)$ and canonical $\bm{J}$.
The system breaks the symmetry between $q<0$ and $q>0$ by the strictly positive factor $(q-s)^2+w$: both roots remain global minima of equal depth, but the stiffnesses become
$V''(q_\ell) = 2k(q_\ell - q_r)^2((q_\ell-s)^2+w) = 2.067$ and
$V''(q_r) = 2k(q_\ell - q_r)^2((q_r-s)^2+w) = 0.459$.

\paragraph{Four-magnet pendulum.}
Inspired by \cite{ChristianChaosIT}, we leverage a quadratic potential in a $\mathbb{R}^4$ dimensional state with two positional coordinates $\bm{q}=(x,y)$ and $\bm{p}$ for the momentum. The potential is given by $V(x,y) = K[(x^2+y^2-d^2)^2 + 4\lambda x^2 y^2]$ with $d = \sqrt2$, $\lambda = 4$, $K = 1/(4d^2)$
making the curvatures at the minimum-energy equilibria independently of $d$, so $\lambda$ alone controls anisotropy and $d$ alone the barrier scale. With $T(p)=\tfrac12 \|p\|_2^2$ forming the Hamiltonian $\rm{H}(q,p)=V(q)+T(p)$ and the pH system with dampening $\bm{R}=\operatorname{diag}(0,0,0.85,0.85)$ and canonical $\bm{J}$.
The critical set is four minima ($V=0$), four saddles at radius $d/\sqrt{1+\lambda}$ on the diagonals, and one maximum at the origin.

\subsection{Data Generation}
\label{app:data}
Every trajectory is a solution of $\dot{\bm{x}} = (\bm{J}-\bm{R})\nabla\mathrm{H}(\bm{x})$ obtained with the Tsit5 (the Tsitouras 5(4) explicit Runge--Kutta pair) solver under a proportional–integral–derivative (PID) step-size controller at \mbox{rtol = atol = $10^{-6}$}.
The ground truth gradient $\nabla V$ is supplied in closed form rather than by automatic differentiation.

Initial conditions use a batch-covering scheme: each draw is uniform over the phase window and rejected only if fewer than $2\%$ of its trajectories don't converge to a minimal-energy equilibrium.
The windows are the box $[-2,2]^2$ (Duffing), the box $q_0\in[-2.4,3.2]$, $p_0\in[-2,2]$ (Asym. Duffing), and the discs $\lVert\bm{q}_0\rVert \leq 1.9475$, $\lVert\bm{p}_0\rVert \leq 2.0976$ (Four-Magnets Pendulum), each chosen so the basins are sampled to a comparable potential energy of about $2.2$. Train, validation and test draw three disjoint sets of initial conditions of size $n_{\text{train}} = 50$, $n_{\text{val}} = 50$ and $n_{\text{test}} = 200$; and are integrated on a time grid with 10 steps per unit time interval until $T=100$ for both Duffing and asym. Duffing and with 100 steps per unit time interval until $T=25$ for Four-Magnets due to increased complexity of the system.

\subsection{Numerical Verification of Assumption}
\label{app:audit}

For every trained ms-PHNN instance we locate all critical points of $\mathrm{H}_{\bm{\theta}}$ by damped Newton iteration from a uniform grid of starts covering the sampling window and evaluated $\nabla^2_{\bm{x}}\mathrm{H}_{\bm{\theta}}$ and $\bm{A} = [\bm{J}_{\bm{\theta}}-\bm{R}_{\bm{\theta}}]\nabla^2_{\bm{x}}\mathrm{H}_{\bm{\theta}}$.
\begin{table}[]
  \caption{Numerical verification of Assumption~\ref{ass:phnn_equilibria} over $10$ seeds per system inspecting the eigenspectrum of critical points obtained by a fitted ms-PHNN.}
  \label{tab:audit}
  \centering
  \small
  \setlength{\tabcolsep}{6pt}
  \begin{tabular}{lcccccc}
    \toprule
    & \multicolumn{3}{c}{located\,/\,true} & & & \\
    \cmidrule(lr){2-4}
    System & min & saddle & max & $\lambda_{\min}(\nabla^2\mathcal{H})$ at min & $\lambda_{\min}(\nabla^2\mathcal{H})$ other  & $\min_i \lvert\operatorname{Re}\lambda_i(\bm{A})\rvert$ \\
    \midrule
    Duffing& $2/2$ & $1/1$ & $0/0$ & $[0.028,\, 0.073]$ & $[-0.073,\, -0.028]$ & $\geq 0.471$ \\
    Asym. Duffing     & $2/2$ & $1/1$ & $0/0$ & $[0.005,\, 0.059]$ & $[-0.069,\, -0.006]$ & $\geq 0.320$ \\
    Four-Magnets      & $4/4$ & $4/4$ & $1/1$ & $[0.504,\, 0.902]$ & $[-2.330,\, -0.703]$ & $\geq 0.417$ \\
    \bottomrule
  \end{tabular}
\end{table}

As seen in Table~\ref{tab:audit}, no instance produced another {minimum}: every other critical point had $\lambda_{\min}(\nabla^2\mathcal{H}_{\bm{\theta}}) < 0$ and $\min_i \lvert\operatorname{Re}\lambda_i(\bm{A})\rvert > 0$, as Assumption~\ref{ass:phnn_equilibria} requires and as the Theorem~\ref{thm:almost_global_stability} needs.
\end{document}